\documentclass{article}
\PassOptionsToPackage{numbers, compress}{natbib}

 \usepackage[preprint]{neurips_2026}

\usepackage[utf8]{inputenc} % allow utf-8 input
\usepackage[T1]{fontenc}    % use 8-bit T1 fonts
\usepackage{hyperref}       % hyperlinks
\usepackage{url}            % simple URL typesetting
\usepackage{booktabs}       % professional-quality tables
\usepackage{amsfonts}       % blackboard math symbols
\usepackage{nicefrac}       % compact symbols for 1/2, etc.
\usepackage{microtype}      % microtypography
\usepackage[dvipsnames]{xcolor}       % colors
\usepackage{algorithm}
\usepackage{algorithmic}
\usepackage{amsmath}
\usepackage{amssymb}
\usepackage{mathtools}
\usepackage{amsthm}
\usepackage{bm}
\usepackage[capitalize,noabbrev]{cleveref}
\usepackage{makecell}
\usepackage{multirow}
\usepackage[textsize=tiny]{todonotes}
\usepackage{ascmac}
\usepackage{wrapfig}
\usepackage[most]{tcolorbox}
\usepackage{enumitem}

\theoremstyle{plain}
\newtheorem{theorem}{Theorem}
\newtheorem{proposition}{Proposition}

\theoremstyle{definition}

\newtheorem{assumption}{Assumption}
\theoremstyle{remark}

\title{Sharpness-Aware Hybrid Model Learning for Architecture-Agnostic Parameter Estimation}

\author{%
  Naoya Takeishi\\
  The University of Tokyo\\
  \texttt{ntake@g.ecc.u-tokyo.ac.jp}
}

\begin{document}

\maketitle

\begin{abstract}
Hybrid modeling, the combination of machine learning models and scientific mathematical models, enables flexible and robust data-driven prediction with partial interpretability. However, the unknown parameters of the scientific model cannot necessarily be estimated properly, since the flexibility of the machine learning model might make the scientific model part effectively ignored in prediction. We may avoid it by applying some regularization, but the formulation of such regularizers typically depends on model architectures and domain knowledge. In this paper, we propose an architecture-agnostic method to learn hybrid models while properly estimating the scientific parameters. The idea is to use the flatness of loss minima to achieve model simplicity, based upon the Occam's razor principle. We employ the idea of sharpness-aware minimization and adapt it to the hybrid modeling setting. Numerical experiments demonstrate the effectiveness of the SAM-based hybrid model learning for scientific parameter estimation.
\end{abstract}

\section{Introduction}\label{intro}

Combining scientific mathematical models and machine learning models can be useful for improving generalization, robustness, and partial interpretability of prediction.
The combination of these two contrasting regimes has been actively studied in various fields, including weather forecasting \citep{xuGeneralizingWeatherForecast2024,vermaClimODEClimateWeather2024}, robotics \citep{ajayAugmentingPhysicalSimulators2018,salzmannRealtimeNeuralMPC2023,heidenNeuralSimAugmentingDifferentiable2021,gaoSimtorealSoftRobots2024}, and healthcare \citep{millerLearningInsulinglucoseDynamics2020,kashtanovaDeepLearningModel2022,palumboHybridModelingPhotoplethysmography2025}.
However, such \emph{hybrid modeling} is, while being conceptually simple and intuitive, sometimes not technically straightforward.
When a machine learning model is too flexible compared to a scientific model to be combined with, hybrid modeling may become pointless, with the machine learning part alone fitting to all the data variation and the scientific model left unused \citep{yinAugmentingPhysicalModels2021,takeishiPhysicsintegratedVariationalAutoencoders2021,takeishiDeepGreyboxModeling2023,zouHybrid$^2$NeuralODE2024,wehenkelRobustHybridLearning2023}.
In other words, the unknown parameters of a scientific model, if any, may become unidentifiable regardless of their original identifiability.
Such a failure mode is indeed problematic because identifying the scientific parameters correctly, in some sense, is a key factor in ensuring the partial interpretation of hybrid models.

The identifiability issue in hybrid modeling can be mitigated by constraining the flexibility of the machine learning part so that the scientific model is adequately utilized.
For example, suppose an additive hybrid model $y = f(x) + g(x)$, where $f$ and $g$ are a scientific model and a neural net, respectively, and that both have unknown parameters.
In learning such a hybrid model, minimizing a loss along with $\Vert g \Vert^2$ sounds natural to suppress the excess flexibility of $g$, as practiced by \citet{yinAugmentingPhysicalModels2021}, because the two models are combined additively, and thus suppressing $g$'s energy would directly result in maximizing the use of $f$.
Now, how should we design a similar regularizer for a hybrid model with general composition, $y = g(x, f(x))$?
What about $y = g_3(x, f_1(x, g_2(x)) \cdot g_1(f_2(x)))$, and whatever more complicated models?
For hybrid architectures with various functional forms and nested function compositions, designing a good regularizer becomes far from trivial.

In this paper, we aim to develop a versatile method applicable to general hybrid models to partially address the unidentifiability of scientific parameters.
We hypothesize that \emph{seeking flat minima of a loss can help estimate scientific parameters}, at least to some extent.
Based on the hypothesis we propose to use a variant of sharpness-aware minimization (SAM) \citep{foretSharpnessawareMinimizationEfficiently2021} for learning deep hybrid models.
The proposed method does not depend on specific model architectures and thus should be useful broadly for different hybrid models.
We demonstrate the validity of the hypothesis through experiments with data from various physical systems and hybrid models with different architectures.

Our contribution lies in connecting two different lines of research that have never crossed.
On the one hand, loss flatness and its algorithmic implementations have been discussed in connection with generalization capability \citep[e.g.,][]{hintonKeepingNeuralNetworks1993,hochreiterFlatMinima1997,chaudhariEntropySGDBiasingGradient2017,dziugaiteComputingNonvacuousGeneralization2017,foretSharpnessawareMinimizationEfficiently2021}.
Generalization is not equivalent to parameter identification, as a ``correct'' scientific parameter does not necessarily lead to the best generalization in hybrid models.
On the other hand, regularization for hybrid models has been mostly developed in accordance with specific model architectures \citep{yinAugmentingPhysicalModels2021,takeishiPhysicsintegratedVariationalAutoencoders2021,takeishiDeepGreyboxModeling2023,zouHybrid$^2$NeuralODE2024,wehenkelRobustHybridLearning2023}, and the property of a loss surface has not been considered relevant.
In this paper, we shed light on the utility of loss flatness for hybrid model learning, opening up a new pathway toward architecture-agnostic parameter estimation in hybrid models. 

\section{Background: Hybrid Modeling}

Suppose a prediction task to predict $y \in \mathcal{Y} \subset \mathbb{R}^{d_y}$ from $x \in \mathcal{X} \subset \mathbb{R}^{d_x}$.
This restriction is only for notational simplicity, and the discussion below can be easily extended to more general settings.
Let $f_\theta: \mathcal{X} \to \mathcal{Z}$ and $g_\phi: \mathcal{X} \times \mathcal{Z} \to \mathcal{Y}$ be a scientific mathematical model and a machine learning model (e.g., a deep neural net) parametrized by $\theta \in \Theta \subset \mathbb{R}^{d_\theta}$ and $\phi \in \Phi \subset \mathbb{R}^{d_\phi}$, respectively.
We denote the output space of the scientific model by $\mathcal{Z}$.
Importantly, we suppose that the scientific model, $f_\theta$, is incomplete in some sense to describe data precisely, which makes it meaningful to combine the scientific model with the data-driven, machine learning model.

Unless stated otherwise, we consider the combination of the two as a general function composition:
\begin{equation}\label{eq:general_hybrid_model}
    h_{\theta,\phi}(x) = g_\phi(x, f_\theta(x)).
\end{equation}
While more complex, nested compositions can be considered as well, to avoid clutter we stick to this simple form, which already may cause the issue discussed below.
Note also that hybrid models in practice may contain further operations such as numerical integration over time; for example, it is typical to use hybrid neural ODEs, $y = \operatorname{ODESolve} ( h_{\theta,\phi}(x;t)=0 )$, in modeling dynamical systems \citep{yinAugmentingPhysicalModels2021}.
We omit such operations for notational simplicity.

We informally assume the following:
First, $g_\phi$ is sufficiently flexible to approximate arbitrary functions from (compact subsets of) $\mathcal{X} \times \mathcal{Z}$ to $\mathcal{Y}$; this is usually the case when $g_\phi$ is a deep neural net.
Second, we have data of $x \in \mathcal{X}$ and $y \in \mathcal{Y}$ only, and the values of $z \in \mathcal{Z}$ are not observable as data; if they were, learning hybrid models would become trivial.
Third, $f_\theta$'s parameters, $\theta$, would be identifiable \emph{if} we had sufficient data of $x$ and $z$; in other words, the scientific model itself originally possessed structural and practical identifiability \citep[see, e.g.,][]{guillaumeIntroductoryOverviewIdentifiability2019,anstett-collinPrioriIdentifiabilityOverview2020,wielandStructuralPracticalIdentifiability2021}.

Due to the flexibility of $g_\phi$ and the unobservability of $z$, we face a risk that the hybrid model, $h_{\theta,\phi}$, fits all the data variation by adjusting only the neural net's parameters $\phi$, with arbitrary values of the scientific model's parameters $\theta$.
Let $L(\theta, \phi)$ denote the (population) loss function.
In terms of estimating $\theta$, merely minimizing the profile loss,
\begin{equation*}
    \min_\theta J(\theta) \quad\text{where}\quad J(\theta) \coloneqq \min_\phi L(\theta, \phi),
\end{equation*}
does not identify $\theta$ when the profile loss uniformly reaches a small level $\varepsilon$, i.e., $ J(\theta) \approx \varepsilon$ for all $\theta$, and thus the outer problem does not have clear minima (or has too many insignificant ones) in the space of $\theta$.
In the meantime the model can always fit the data due to the flexibility of $g_\phi$.
Consequently, the scientific model, $f_\theta$, would be left unused, and the merits of hybrid modeling would be lost \citep{yinAugmentingPhysicalModels2021,takeishiPhysicsintegratedVariationalAutoencoders2021,takeishiDeepGreyboxModeling2023}.

A natural circumvention to break the tie in the profile loss is to incorporate a constraint or regularizer.
Such a regularizer should suppress the excess flexibility of the machine learning part and encourage the use of the scientific model in prediction.
For instance, consider an additive hybrid model $h_{\theta,\phi}(x) = f_\theta(x) + g_\phi(x)$.
\citet{yinAugmentingPhysicalModels2021} suggested suppressing the flexibility of $g_\phi$ by penalizing its functional norm, $\Vert g_\phi \Vert^2$.
Valid formulation of such a regularizer heavily depends on task definition and model architectures; unlike the additive form that allows an intuitive design of the regularizer, it is unclear how we should regularize more complicated hybrid architectures.

\section{Learning Hybrid Models with Simplicity Principle}
\label{method}

\subsection{Idea}

A typical strategy in learning hybrid models is to add a regularizer, i.e., to minimize
\begin{equation}\label{eq:regularized_profile_loss}
    J_R(\theta) \coloneqq \min_\phi \Big( L(\theta, \phi) + R(\theta, \phi) \Big),
\end{equation}
where $R(\theta, \phi)$ is some regularization term for suppressing the excess flexibility of $g_\phi$.
For example, \citet{yinAugmentingPhysicalModels2021} suggested using $R \propto \| g_\phi \|^2$ for additive hybrid models.
\citet{takeishiPhysicsintegratedVariationalAutoencoders2021} proposed to define $R$ as the discrepancy between the full hybrid model and a reduced version where $g_\phi$ was somehow disabled, e.g., by replacing it with a simpler function.
Standard regularizers, such as parameter norm $R \propto \|\phi\|^2$, are known to be not effective particularly for the estimation of $\theta$ \citep{yinAugmentingPhysicalModels2021,takeishiPhysicsintegratedVariationalAutoencoders2021}.
Good design of $R$ heavily depends on the architecture of hybrid models, which hinders efficient development of various hybrid model architectures with complicated function compositions.

Toward an architecture-agnostic learning strategy, we step back to the Occam's razor principle that simpler models are preferred when multiple models explain the data equally well.
We apply it to hybrid model learning, assuming the following intuitive property:
\begin{tcolorbox}[colframe=SkyBlue,colback=SkyBlue!10!White,top=2pt,bottom=2pt,enlarge top by=-3pt,enlarge bottom by=-3pt]
    \centering
    good scientific parameter $\theta$ \ $\Leftrightarrow$ \ the residual task left to $g_\phi$ is simple and short to describe.
\end{tcolorbox}
We show the condition for identification more formally in Appendix~\ref{appendix:identifiability}.
The idea can be implemented by using the free energy:
\begin{equation}\label{eq:soft_profile_loss}
    \bar{J}(\theta) \coloneqq \min_q \Big( \mathbb{E}_{\phi \sim q} L(\theta, \phi) + \operatorname{KL}(q(\phi) \ \| \ p(\phi)) \Big),
\end{equation}
where $p$ and $p$ are prior and posterior distributions of $\phi$, respectively.
In \eqref{eq:soft_profile_loss}, we consider distributions on $\phi$, unlike the point estimation in \eqref{eq:regularized_profile_loss}.
The KL term is necessary to prevent point-mass $q$.
The minimizer of the $\min_q$ in \eqref{eq:soft_profile_loss} is known to be $q=q^*_\theta (\phi)\propto p(\phi) \exp(-L(\theta,\phi))$ \citep[see, e.g.,][]{dupuis_ellis_1997}.
% Proposition 1.4.2

Why does using \eqref{eq:soft_profile_loss} as a loss realize simplicity-based learning?
We can understand it based on the well-known argument about its relation to description length \citep[e.g.,][]{hintonKeepingNeuralNetworks1993,hintonAutoencodersMinimumDescription1993,honkelaVariationalLearningBitsback2004,gravesPracticalVariationalInference2011a}.
Suppose $L$ is the negative log likelihood, i.e., $L(\theta,\phi) = -\log \pi(\mathcal{D} \mid \theta,\phi)$, where $\mathcal{D}$ and $\pi$ denote data and a likelihood function.
Substituting the minimizer $q^*_\theta$ back to \eqref{eq:soft_profile_loss}, we have
\begin{equation}\label{eq:description_length}
    \bar{J}(\theta) = - \mathbb{E}_{\phi \sim q^*_\theta} \log \pi(\mathcal{D} \mid \theta,\phi) + \operatorname{KL}(q^*_\theta(\phi) \ \| \ p(\phi)).
\end{equation}
The first term of \eqref{eq:description_length} is the expected cost of coding the data given $\phi$ (and $\theta$), and from the ``bits back'' argument \citep{hintonKeepingNeuralNetworks1993}, the second term is the expected extra cost of coding $\phi$
itself using $p$ as the prior code.
Minimizing $\bar{J}$ therefore achieves a simple model in the sense of the minimum description length.

It has also been known that the loss in \eqref{eq:soft_profile_loss} or \eqref{eq:description_length} is related to the flatness of loss minima \citep[e.g.,][]{hintonKeepingNeuralNetworks1993,hochreiterFlatMinima1997,dziugaiteComputingNonvacuousGeneralization2017,chaudhariEntropySGDBiasingGradient2017,mollenhoffSAMOptimalRelaxation2022}.
Suppose that a wrong estimation of $\theta$ can only be made good (in terms of small $L$) by a very specially tuned value of $\phi$, namely $\phi_\theta$, corresponding to a non-flat, sharp minimum.
The original profile loss, $J$, may exploit such $\phi_\theta$, as the argument of $\min_\phi$ can indeed point to $\phi_\theta$.
On the other hand, $\bar{J}$ penalizes such $\phi_\theta$ because for $\phi \neq \phi_\theta$ nearby, the value of $L(\theta,\phi)$ is not small, and thus $\mathbb{E}_{\phi \sim q} L(\theta, \phi)$ becomes large.
We thus use an optimization method that takes the flatness into account, as presented in the next section.

The hypothetical benefit of $\bar{J}$ is based on the Occam's razor principle and cannot be automatically guaranteed for all problem settings, without additional assumptions about the data distribution and the model architecture.
We thus examine the empirical performance of the proposed method in \cref{experiment}.

\subsection{SAM for Hybrid Models}
\label{method:main}

Although we can solve the inner problem of $\min_\theta \min_q (\cdots)$ exactly like \eqref{eq:description_length}, solving the outer problem directly is challenging because the computation and optimization of the normalizing constant of $q^*_\theta$, which depends on $\theta$, is usually not tractable.
Thus in practice, we think of minimizing
\begin{equation}\label{eq:minimand}
    \mathbb{E}_{\phi \sim q} L(\theta, \phi) + \operatorname{KL}(q(\phi) \ \| \ p(\phi)),
\end{equation}
simultaneously for $\theta$ and (sufficient statistics of) $q(\phi)$.
However, even if we restrict $q$ to a tractable family, taking the expectation about high-dimensional $\phi$ is inefficient and unstable.

We thus suggest relaxing the problem and using SAM \citep{foretSharpnessawareMinimizationEfficiently2021} to seek flat (non-sharp) minima of the loss in learning hybrid models.
See Appendix~\ref{appendix:sam} for a general explanation of SAM and its variants.
The overall procedures are shown in \cref{alg:main}.
At each step we first compute the direction $\epsilon^*_\phi$ to ascend the loss surface.
We then compute the gradient at the perturbed point, go back to the original point storing the gradient, and finally do the update with the gradient.

\begin{wrapfigure}[8]{r}[0pt]{.45\textwidth}
    \centering
    \vspace*{-2ex}
    \includegraphics[clip,width=0.95\linewidth]{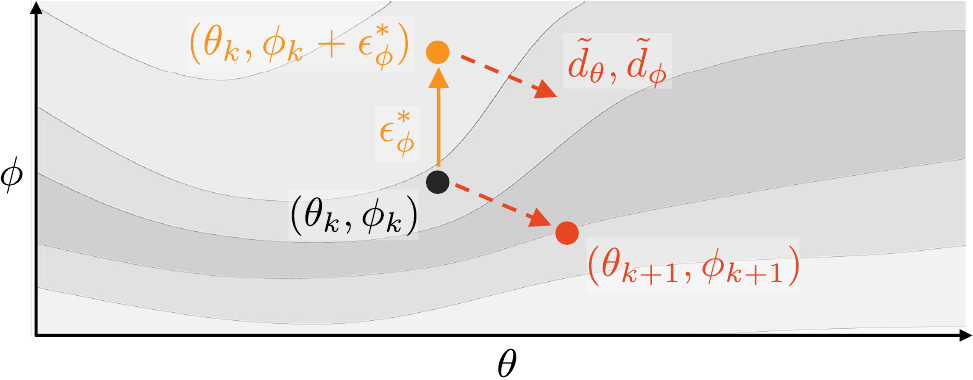}
    \vspace*{-8pt}
    \caption{Update step of \cref{alg:main}.}
    \label{fig:hybrid_sam}
\end{wrapfigure}
\cref{alg:main} is slightly modified from ordinary SAMs \citep{foretSharpnessawareMinimizationEfficiently2021} in that the parameter is perturbed only in the $\phi$-direction, as depicted in \cref{fig:hybrid_sam}.
Because it is the machine learning part $g_\phi$ that should be simple for the maximum use of the scientific part $f_\theta$, SAM should reach a value of $\phi$ around which the loss is flat along the $\phi$-axis.
On the other hand, nothing requires the flatness along the $\theta$-axis; on the contrary, being flat along $\theta$ means that different values of $\theta$ achieve similarly small loss values, implying $\theta$ is hard to identify.
% The inherent identifiability of $\theta$ depends on the structural and practical identifiability for the scientific model $f_\theta$ and should not be harmed.
In practice, however, the effect of the modification is not necessarily dominant in performance because a scientific model usually has a small number of parameters.

\begin{algorithm}[t]
    \caption{SAM for hybrid models}
    \label{alg:main}
    \begin{algorithmic}[1]
        \REQUIRE Number of epochs $M$, data $S$, loss $\ell(\cdot,\cdot)$, learning rates $\eta_\theta,\eta_\phi$, perturbation radius $\rho_\phi$
        \ENSURE Learned parameters $\theta,\phi$
        \STATE Initialize $\theta,\phi$
        \FOR{epoch $=1$ to $M$}
            \FOR{each minibatch $B \subset S$}
                \STATE $L_B(\theta,\phi) \gets \frac{1}{|B|}\sum_{(x,y)\in B}\ell\!\left(h_{\theta,\phi}(x),y\right)$
                \STATE $\epsilon^*_\phi \gets$ ascend direction by \eqref{eq:eps_sam}, \eqref{eq:eps_asam}, or \eqref{eq:eps_fsam} (or other variants)
                \STATE $\tilde d_\theta \gets \nabla_\theta L_B(\theta,\phi+\epsilon^*_\phi)$, \quad $\tilde d_\phi \gets \nabla_\phi L_B(\theta,\phi+\epsilon^*_\phi)$
                \STATE $\theta \gets \theta - \eta\, \tilde d_\theta$, \quad $\phi \gets \phi - \eta\, \tilde d_\phi$
            \ENDFOR
        \ENDFOR
    \end{algorithmic}
\end{algorithm}

\paragraph{Analysis}
Let us see how SAM is relevant to the original minimization objective in \eqref{eq:minimand}.
Suppose the prior $p(\phi)$ and posterior $q(\phi)$ are Gaussian.
Then the KL term in \eqref{eq:minimand} can be computed analytically.
We thus want to approximate the remaining, expected population loss $\mathbb{E}_{\phi \sim q} L(\theta, \phi)$.
We can show that our SAM variant in \cref{alg:main} minimizes an upper bound of it.
Let $S \coloneqq \{(x_i,y_i) \mid i=1,\dots,n\}$ be a training dataset with i.i.d. samples.
The objective of SAM for hybrid models in \cref{alg:main} is
\begin{equation*}
    L_S^\text{sam}(\theta, \phi) \coloneqq \max_{\|\epsilon\| \leq \rho} L_S(\theta, \phi + \epsilon),
    \quad \text{where} \quad
    L_S(\theta, \phi) \coloneqq \frac1n \sum_{(x,y) \in S} \ell(h_{\theta,\phi}(x), y)).
\end{equation*}
The expected population loss is bounded by the SAM objective as follows:
\begin{theorem}[informal]
    Suppose that $\ell$ is bounded and Lipschitz in $\theta$, and that $\Theta$ is a bounded set. Let $q \!=\! q_\varphi \coloneqq \mathcal{N}(\phi \mid \varphi, \tau^2 I)$ for $\varphi \in \Phi$ and $\tau^2 > 0$.
    Then, with high probability over the choice of $S$,
    \begin{equation*}
        \mathbb{E}_{\phi \sim q_\varphi} L(\theta, \phi)
        \leq L_S^\text{sam}(\theta, \varphi)
        + \mathcal{C}_n (\varphi, \Theta)
        + o(1),
    \end{equation*}
    for all $\theta \in \Theta$ and $\varphi \in \Phi$, where $\mathcal{C}_n (\varphi, \Theta)$ is a complexity term that is increasing in both $\|\varphi\|^2$ and the metric entropy of $\Theta$, and decreases as $\widetilde{O}(1/\sqrt{n})$. The $o(1)$ term comes from a Gaussian tail and decays exponentially in $d_\phi$.
\end{theorem}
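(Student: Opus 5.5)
The plan is to follow the PAC-Bayesian route of \citet{foretSharpnessawareMinimizationEfficiently2021} with two changes. The perturbation lives only in the $\phi$-coordinates. The scientific parameter $\theta$ is handled by a uniform (covering) argument over $\Theta$ rather than being absorbed into the prior.

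\textbf{Step 1: PAC-Bayes bound for a fixed $\theta$.} Fix $\theta$ and use the prior $p=\mathcal{N}(0,\sigma^2 I)$ on $\phi$. Since $\ell$ is bounded, a McAllester/Catoni-type PAC-Bayes bound gives, with probability at least $1-\delta$ and simultaneously for all posteriors $q$,
\[
\mathbb{E}_{\phi\sim q}L(\theta,\phi)\le \mathbb{E}_{\phi\sim q}L_S(\theta,\phi)+\sqrt{\frac{\operatorname{KL}(q\,\|\,p)+\log(n/\delta)}{2(n-1)}}.
\]
For $q=q_\varphi$ the KL term is explicit: $\frac12\big[d_\phi(\tau^2/\sigma^2-1+\log(\sigma^2/\tau^2))+\|\varphi\|^2/\sigma^2\big]$.

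As in Foret et al., the prior variance $\sigma^2$ must be tuned to $\|\varphi\|^2$. I would do this by a union bound over a geometric grid of $\sigma$ values, with the grid index chosen according to $\|\varphi\|$. This costs only a $\log\log$ factor and yields a term increasing in $\|\varphi\|^2$.

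\textbf{Step 2: uniformity over $\theta$.} Take an $\alpha$-net $N_\alpha$ of the bounded set $\Theta$, of size $\mathcal{N}(\Theta,\alpha)$, and apply Step 1 to each net point with confidence $\delta/|N_\alpha|$. The price is an additional $\log\mathcal{N}(\Theta,\alpha)$ inside the square root, which is the metric-entropy dependence.

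For an arbitrary $\theta$, pick the closest net point $\theta'$. Lipschitzness of $\ell$ in $\theta$ transfers both the population and the empirical expected losses at a cost of at most $2K\alpha$. Choosing $\alpha\sim 1/\sqrt n$ keeps this within $\widetilde O(1/\sqrt n)$. All of these terms are collected into $\mathcal{C}_n(\varphi,\Theta)$.

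\textbf{Step 3: from the Gaussian average to the SAM objective.} Write $\phi=\varphi+\epsilon$ with $\epsilon\sim\mathcal{N}(0,\tau^2I)$ and split on the event $\{\|\epsilon\|\le\rho\}$. On that event, $L_S(\theta,\varphi+\epsilon)\le L_S^{\text{sam}}(\theta,\varphi)$. Off the event, the loss is at most $\sup\ell$.

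Choose $\rho$ slightly above $\tau\sqrt{d_\phi}$, for instance $\rho=\tau(\sqrt{d_\phi}+\sqrt{2\log n})$, or take $\tau$ as a function of $\rho$. Chi-square (Laurent–Massart) concentration then bounds $\Pr(\|\epsilon\|>\rho)$ by a quantity decaying exponentially in $d_\phi$ (and in $\log n$). This gives the $o(1)$ term.

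\textbf{Main obstacle.} The delicate part is Steps 1–2 together. The $\sigma$-grid and the $\theta$-net must both be chosen data-independently while $\varphi$ and $\theta$ remain arbitrary. I also need to check that matching $\tau$ to $\rho$ does not blow up the $d_\phi\log(\sigma^2/\tau^2)$ part of the KL. My first choice is to take $\sigma^2$ on the grid near $\tau^2+\|\varphi\|^2/d_\phi$, exactly as in Foret et al. With that choice the dimension-dependent KL part becomes $\frac{d_\phi}{2}\log(1+\|\varphi\|^2/(d_\phi\tau^2))$, which is increasing in $\|\varphi\|^2$ as the statement claims.
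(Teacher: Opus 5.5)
Your proposal is correct and follows essentially the same route as the paper: Foret et al.'s PAC-Bayes bound with its prior-variance grid, which produces the $\frac14 d_\phi\log\bigl(1+\|\varphi\|^2/(d_\phi\tau^2)\bigr)$ term, is applied at each point of an $r$-cover of $\Theta$ with a union bound; a $2\lambda r$ Lipschitz transfer extends it to all $\theta$; and the Gaussian average is split inside and outside the $\rho$-ball (the paper bounds the tail by $e^{-s^2/2}$ via Borell--TIS with $\tau=\rho/(\sqrt{d_\phi}+s)$, where you use Laurent--Massart). One small correction: your example $\rho=\tau(\sqrt{d_\phi}+\sqrt{2\log n})$ gives a tail of only $e^{-\log n}=1/n$, which is not exponentially small in $d_\phi$; to get the exponential-in-$d_\phi$ decay claimed in the statement, the margin $\rho/\tau-\sqrt{d_\phi}$ must grow like $\sqrt{d_\phi}$.
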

\begin{proof}
    See Appendix~\ref{appendix:theory}.
    The proof is based on the PAC-Bayes argument of \citet{foretSharpnessawareMinimizationEfficiently2021} with a uniform-convergence argument based on covering numbers of $\Theta$.
\end{proof}

Moreover, \citet{mollenhoffSAMOptimalRelaxation2022} analyzed the relation between the variational formulation of Bayesian inference, like \eqref{eq:minimand}, and SAM.
They show the Bayes objective can be lower bounded by a soft version of SAM, where the perturbation $\epsilon$ is chosen with a soft penalty term $\propto \|\epsilon\|^2$, instead of the hard constraint $\|\epsilon\| \leq \rho$.
From this fact they point out (a variant of) SAM solves a relaxation of the Bayes problem, which further motivates us to use SAM to tackle the minimization of \eqref{eq:minimand}.

\section{Related Work}

\subsection{Hybrid Modeling}

Hybrid models (also called grey-box models, residual physics, etc.) combining scientific mathematical models (also called mechanistic models, physics models, expert models, etc.) and machine learning models have been studied for decades \citep[e.g.,][]{psichogiosHybridNeuralNetworkfirst1992,rico-martinezContinuoustimeNonlinearSignal1994,thompsonModelingChemicalProcesses1994,forssellCombiningSemiphysicalNeural1997,schweidtmannReviewPerspectiveHybrid2024}.
Recently researchers have been interested in using deep neural nets in hybrid modeling \citep[e.g.][]{yinAugmentingPhysicalModels2021,takeishiPhysicsintegratedVariationalAutoencoders2021,takeishiDeepGreyboxModeling2023,qianIntegratingExpertODEs2021,haussmannLearningPartiallyKnown2021,mehtaNeuralDynamicalSystems2021,wehenkelRobustHybridLearning2023,tusharDeepPhysicsCorrector2023,rudolphHybridModelingDesign2024,holtAutomaticallyLearningHybrid2024,yeIdentifiabilityHybridDeep2024,giampiccoloRobustParameterEstimation2024,cohrsCausalHybridModeling2024,claesHybridAdditiveModeling2025,thoreauPhysicsinformedVariationalAutoencoders2025,singhVariationalGreyboxDynamics2026}.
The application of deep hybrid modeling has been actively practiced in various fields such as traffic prediction \citep{shirakamiQTNetTheorybasedQueue2023}, weather forecasting \citep{xuGeneralizingWeatherForecast2024,vermaClimODEClimateWeather2024}, robotics \citep{ajayAugmentingPhysicalSimulators2018,salzmannRealtimeNeuralMPC2023,heidenNeuralSimAugmentingDifferentiable2021,gaoSimtorealSoftRobots2024}, disease prediction \citep{arikInterpretableSequenceLearning2020}, and healthcare \citep{millerLearningInsulinglucoseDynamics2020,kashtanovaDeepLearningModel2022,palumboHybridModelingPhotoplethysmography2025}.

Several researchers have been working on methods to learn deep hybrid models appropriately to ensure the maximum use of scientific mathematical models.
\citet{yinAugmentingPhysicalModels2021} discuss hybrid neural ODEs and propose a method to learn the model by minimizing the functional norm of the neural network part.
It is a natural formulation for the additive combination of the scientific model and the neural net, but is not straightforwardly applicable to other forms of hybrid models.
\citet{takeishiPhysicsintegratedVariationalAutoencoders2021} propose a regularizer that minimizes the discrepancy between the full hybrid model and a scientific-model-only reduction and apply it to variational autoencoders whose decoder has a hybrid structure.
It is in principle applicable to general architectures as long as the reduced model can be defined, but the reduced model needs to be designed for each architecture.
We will examine the performance of these methods in the experiments in \cref{experiment}.

Some methods utilize more specific model architectures and additional information.
\citet{zouHybrid$^2$NeuralODE2024} present a method to use domain knowledge of the effects of interventions to maintain meaningful causal implications.
\citet{claesHybridAdditiveModeling2025} propose a principled method to properly learn hybrid models when the two parts work on distinct feature sets.
\citet{cohrsCausalHybridModeling2024} suggest using the method of double machine learning for specific types of hybrid models.
In the context of simulation-based inference with misspecified simulators, which is very close to the idea of hybrid modeling, \citet{wehenkelAddressingMisspecificationSimulationbased2025} and \citet{senoufInductiveDomainTransfer2025} utilize a small amount of calibration data, i.e., data of the output of the scientific model.
These methods would certainly work well when the assumptions are met, but in this paper, we do not assume such specific architectures or the presence of calibration data.

\subsection{Flatness and Generalization}

Flatness of loss minima has been discussed in connection with generalization capability \citep[e.g.,][]{hintonKeepingNeuralNetworks1993,hochreiterFlatMinima1997,baldassiUnreasonableEffectivenessLearning2016,dziugaiteComputingNonvacuousGeneralization2017,chaudhariEntropySGDBiasingGradient2017,keskarLargebatchTrainingDeep2017,izmailovAveragingWeightsLeads2018}.
Flat minima are intuitively understood as wide valleys in the loss landscape, where the loss remains low even if the parameters are perturbed.
Researchers have studied the advantage of flat minima in generalization, both theoretically and empirically, often with concrete evidence of performance improvement in well-known tasks such as image classification.

One of the rationales of flat minima for better generalization is the correspondence between flatness and model simplicity.
For example, \citet{hintonAutoencodersMinimumDescription1993} and \citet{hochreiterFlatMinima1997} connected flat minima with model complexity in terms of minimum description length \citep{rissanenModelingShortestData1978}.
At a flat minimum, relatively low precision is needed to describe the model's parameters maintaining the performance, thus the model is simple in the sense that the description length is short.
\citet{hochreiterFlatMinima1997} also argued that mere flatness is not sufficient to ensure a short description length and that a minimum should be equally flat for each parameter.
It is also notable that, as discussed by \citet{dinhSharpMinimaCan2017}, the flatness of a loss function may be arbitrarily scaled without altering a function's outcomes.
Such discussions imply the importance of careful definition of flatness \citep[see, e.g.,][]{tsuzukuNormalizedFlatMinima2020}.

\section{Experiment}
\label{experiment}

\subsection{Scope}

The aim of the experiment is to examine the effectiveness of SAM-based learning of hybrid models.
Importantly, we are interested in \emph{how the method helps the estimation of scientific parameters}, especially for architectures to which the existing regularization methods \citep{yinAugmentingPhysicalModels2021,takeishiPhysicsintegratedVariationalAutoencoders2021} are not straightforward to apply.
Among the tasks introduced in \cref{experiment:tasks}, the first two tasks have already been studied in previous studies \citep[e.g.][]{yinAugmentingPhysicalModels2021}, and thus the existing methods are applicable.
Applying the existing methods to the other tasks is not straightforward or requires careful design of the regularizer.

Obviously, the performance of $\theta$ estimation can be measured only when some true or reference values of $\theta$ are available.
We prepared the synthetic data and referred to the domain knowledge for the real-world data.
% Any experiments with quantitative evaluation of the estimation must be controlled in this regard.
% The point of the empirical evaluation here is to see if the SAM-based method, which does not assume specific model architectures, can work comparably with the learning methods whose exact formulation depends on architectures.
Several methods of hybrid model learning are out of the scope of the current experiments or are not applicable to our tasks.
The hybrid models to be tested in the experiments have unknown parameters in both of the scientific and machine learning parts; thus the learning methods used in some hybrid modeling studies \citep[e.g.,][]{haussmannLearningPartiallyKnown2021}, where the scientific model parameters are fixed, are not of our main interest here.
We do not assume at all the access to calibration data, i.e., data of the output of the scientific model, $z = f_\theta(x)$; thus the methods based on partial access to calibration data \citep{wehenkelAddressingMisspecificationSimulationbased2025,senoufInductiveDomainTransfer2025} are out of the scope.
Moreover, none of the methods assuming specific hybrid model architectures or additional information \citep{zouHybrid$^2$NeuralODE2024,cohrsCausalHybridModeling2024,claesHybridAdditiveModeling2025} are applicable to our tasks.

\subsection{Tasks}
\label{experiment:tasks}

The full details of the tasks can be found in Appendix~\ref{appendix:tasks}.

\subsubsection{Synthetic Data Tasks}
We prepared four synthetic data from different physical systems for fully controlled experiments.

\paragraph{\textsc{Pendulum time-series}}
As practiced in previous studies \citep[e.g.,][]{yinAugmentingPhysicalModels2021,takeishiPhysicsintegratedVariationalAutoencoders2021,wehenkelRobustHybridLearning2023}, we generated data from the damped pendulum system $\ddot{v}(t) + \gamma \dot{v}(t) + (2 \pi \omega)^2 \sin v(t)=0$, where $v(t)$ is a pendulum's angle, and $\ddot{v}$ and $\dot{v}$ are the second and first derivatives with regard to $t$, respectively.
After generating data, we assume no access to the data-generating values of $\gamma$ and $\omega$.
We formulate the task as a sequence prediction from $x = \mathbf{v}(0) \in \mathbb{R}^{2}$ to $y = [ \mathbf{v}(\Delta t), \dots, \mathbf{v}(m_y \Delta t) ] \in \mathbb{R}^{m_y \times 2}$, where $\mathbf{v}=[v, \dot{v}]$, $\Delta t$ is the time step, and $m_y$ is the output sequence length.

To formulate a hybrid model, we suppose we only know a part of the equation as an incomplete scientific model $f_\theta(v) := \ddot{v}(t) + (2 \pi \tilde\omega)^2 \sin v(t)$, where $\theta = \{ \tilde\omega \}$ is the unknown parameter to be identified.
Then a hybrid neural ODE is formulated as
\begin{equation}\label{eq:hybrid_node}
    y = \operatorname{ODESolve} ( f_\theta(v) + g_\phi(v) = 0; v(0) = x ),
\end{equation} where $g_\phi$ is a feed-forward neural net with two hidden layers and the ReLU activation function.

\paragraph{\textsc{Reaction-diffusion}}
We generated data from a reaction-diffusion system: $u_t(t,\xi) = a \nabla^2 u + u - u^3 - \kappa - v$, $v_t(t,\xi) = b \nabla^2 v + u - v$, where $u(t,\xi)$ and $v(t,\xi)$ are the concentrations of two chemical species at time $t$ and spatial location $\xi \in [-1,1]^2$, $u_t$ and $v_t$ are the temporal derivatives of $u$ and $v$, respectively, and $\nabla^2$ is the Laplacian operator with regard to the space $\xi$.
The task is again to predict a sequence of the states $y \in \mathbb{R}^{m_y \times 2 \times d \times d}$ given an initial condition $x \in \mathbb{R}^{2 \times d \times d}$, where $d=32$ is the number of spatial discretization points along each axis.
We build a hybrid neural PDE model with an incomplete scientific model as the diffusion terms: $f_\theta(u,v) := [ u_t - \tilde a \nabla^2 u, \ \ v_t - \tilde b \nabla^2 v]^\top$, where $\theta = \{ \tilde a, \tilde b \}$ is the unknown parameter to be identified.
The machine learning part $g_\phi$ is a convolutional neural net.
The two models are combined like in \eqref{eq:hybrid_node}.

\paragraph{\textsc{Duffing oscillator}}
We generated data from a nonlinear oscillator system: $\ddot{v}(t) + \gamma \dot{v}(t) + \alpha v(t) - \beta v(t)^3 = 0$.
We formulate the task similarly to the preceding tasks, i.e., the prediction from an initial condition $x \in \mathbb{R}^{2}$ to a subsequent states $y \in \mathbb{R}^{m_y \times 2}$.
The hybrid model to be used is again a hybrid neural ODE like \eqref{eq:hybrid_node} with an incomplete scientific model: $f_\theta(v) \coloneqq \ddot{v}(t) + \tilde\alpha v(t)$, where $\theta = \{ \tilde\alpha \}$ is the unknown parameter to be estimated.
We used a feed-forward neural net as $g_\phi$.

\paragraph{\textsc{Pendulum images}}
We first generated trajectories of a pendulum as in the \textsc{pendulum time-series} task and then created gray-scale images of size $d \times d$ with $d=48$ pixels based on the simulated pendulum angles.
The task is to predict future image sequences $y \in [0,1]^{m_y \times d \times d}$ given an initial image $x \in [0,1]^{d \times d}$.
The hybrid model is built by composing the hybrid neural ODE in \eqref{eq:hybrid_node} and convolutional neural nets for decoding and encoding.
The decoder maps the pendulum angle to an image, and the encoder maps the input image to a latent representation including the initial condition of the pendulum.
Thus the overall architecture of the hybrid model is more complicated than the mere neural ODE with the additive combination.
Note that the pendulum's initial condition is inherently unidentifiable; we are only interested in estimating the pendulum's frequency, i.e., $\theta=\{ \tilde\omega \}$.

\subsubsection{Real-World Data Tasks}

We used two real-world datasets that have realistic scales as scientific applications.

\paragraph{\textsc{Wind tunnel}}

We used a dataset comprising measurements from a real-world wind tunnel \citep{gamellaCausalChambersRealworld2025}.
We extracted a subset of the data to formulate a task to predict $y \in \mathbb{R}^{m_y}$ (time-seires of the pressure inside the tunnel) from $x \in \mathbb{R}^{m_x \times 3}$ (time-series of the loads of the intake and exhaust fans and the position of the hatch).
The hatch controls an additional opening in the middle of the tunnel.
As a ``scientific'' model, we use the models ``A1'' and ``C2'' presented in \citet{gamellaCausalChambersRealworld2025}, which give a rough relation between the pressure and the fan loads.
These models are incomplete in terms of the effect of the hatch position and the transient dynamics.
The hybrid model transforms the output of these models with a feed-forward neural net, thus the overall architecture is more complicated than additive ones.
We set a scalar parameter of the model C2 as the unknown parameter to be estimated, $\theta \in \mathbb{R}$.
As the data-generating value of $\theta$ is unknown, we used a value suggested in the paper and the codes \citep{gamellaCausalChambersRealworld2025} as a reference to compute the estimation error of $\theta$.

\paragraph{\textsc{Light tunnel}}

We used a dataset comprising real-world measurements from a light tunnel made of light sources, linear polarizers, and a light sensor \citep{gamellaCausalChambersRealworld2025}.
We use the data to formulate a task to predict RGB images taken by the sensor, $y \in [0,1]^{3 \times 100 \times 100}$, from the light source configuration and the polarizer angles, $x \in \mathbb{R}^{5}$.
As an incomplete scientific model, we use the model ``F3'' presented in \citet{gamellaCausalChambersRealworld2025}, which models the effect of the polarizers in a frequency-dependent way.
We treat nine parameters in the model F3 as unknown parameters to be estimated, i.e., $\theta \in \mathbb{R}^9$.
% The scale of $\theta$ is inherently unidentifiable, so we assess the estimation error by cosine similarity.
Our hybrid model transforms the output of the scientific model with a convolutional neural net; so its architecture depends on function compositions more complicated than mere additive combinations.

\subsection{Training Methods}

We examined three baselines: \texttt{erm}, \texttt{p-reg}, and \texttt{f-reg} as introduced below.
Meanwhile, the proposed SAM-based training was implemented with the original SAM \citep{foretSharpnessawareMinimizationEfficiently2021}, adaptive SAM \citep{kwonASAMAdaptiveSharpnessaware2021}, and Fisher SAM \citep{kimFisherSAMInformation2022}; we will refer to the three as \texttt{sam}, \texttt{asam}, and \texttt{fsam}, respectively.

\paragraph{Empirical risk minimization (\texttt{erm})}
We consider the empirical risk minimization without any particular hybrid model regularization on the machine learning part's parameters $\phi$, i.e., to minimize the prediction error.
We applied a slight weight decay just to stabilize the optimization.

\paragraph{Parameter regularization (\texttt{p-reg})}
As a straightforward regularizer, we use the parameter $\ell_2$ norm
\begin{equation*}
    R_\mathrm{p}(\phi) = \lambda_\mathrm{p} \Vert \phi \Vert_2^2.
\end{equation*}
The regularized risk is then $L_S(\theta,\phi) + R_\mathrm{p}(\phi)$, and $\lambda_\mathrm{p} > 0$ is a tunable hyperparameter.

\paragraph{Functional regularization (\texttt{f-reg})}
We employ the idea to suppress the machine learning part's flexibility as a function by penalizing its functional norm \citep{yinAugmentingPhysicalModels2021} or the discrepancy between the overall model and the scientific-model-only reduction \citep{takeishiPhysicsintegratedVariationalAutoencoders2021}.
The exact form of the regularizer depends on model architectures and must be designed accordingly.
For the \textsc{pendulum time-series}, \textsc{Duffing oscillator}, and \textsc{reaction-diffusion} tasks, where $f_\theta$ and $g_\phi$ are additively combined, we define the regularizer as the minimization of the function norm \citep{yinAugmentingPhysicalModels2021}:
\begin{equation*}
    R_\mathrm{f}(\phi) = \frac{\lambda_\mathrm{f}}{|X|} \sum_{x \in X} \Vert g_\phi(x) \Vert_2^2,
\end{equation*}
where $X$ is a set of $x$ drawn from data or some distribution.
For the \textsc{pendulum images}, \textsc{wind tunnel}, and \textsc{light tunnel} tasks, as the models are not additive, we follow the idea of \citet{takeishiPhysicsintegratedVariationalAutoencoders2021}; we define the regularizer as the minimization of
\begin{equation*}
    R_\mathrm{f}(\phi) = \frac{\lambda_\mathrm{f}}{|X|} \sum_{x \in X} \Vert h_{\theta,\phi}(x) - h'_{\theta,\phi'}(x) \Vert_2^2,
\end{equation*}
where $h'_{\theta,\phi'}$ is a scientific-model-only reduction of the hybrid model; for instance, for a hybrid model of the form $h_{\theta,\phi}(x) = g_\phi(x, f_\theta(x))$, we define $h'_{\theta,\phi'}(x) = A f_\theta(x)$ with a linear map $A$, where $\phi' = \{ A \}$.
Such a linear map is required when the dimensionalities of $z$ and $y$ are different; when they are in the same ambient space, we simply fix $A$ as the identity map.
For both definitions, the coefficient $\lambda_\mathrm{f} > 0$ is a hyperparameter.

\paragraph{SAM family (\texttt{sam}, \texttt{asam}, \texttt{fsam})}
We apply SAM to the learning of hybrid models as motivated in \cref{method}.
Instead of fixing the learning rates as in \cref{alg:main}, we use adaptive learning rate by Adam.
We try three different ways to compute the ascend direction $\epsilon^*_\phi$: the standard SAM \citep{foretSharpnessawareMinimizationEfficiently2021}, adaptive SAM \citep{kwonASAMAdaptiveSharpnessaware2021}, and Fisher SAM \citep{kimFisherSAMInformation2022}.
We will refer to these three variants as \texttt{sam}, \texttt{asam}, and \texttt{fsam}, respectively.
The tunable hyperparameter is the perturbation radius $\rho_\phi$.

\subsection{Results}

\begin{table}[t]
    \caption{Estimation errors of $\theta$ and prediction errors of $y$. For \textsc{light tunnel}, we report the cosine similarity between the estimated and reference $\theta$'s because the scale of $\theta$ is structurally unidentifiable; otherwise the root mean squared errors are reported. The best and the second best results are \textbf{bolded} and \underline{underlined}, respectively. Mean and SD over 5 runs with different random seeds are reported.}
    \vspace*{-3pt}
    \label{tab:main_result}
    \centering
    \footnotesize
    \renewcommand{\arraystretch}{0.95}
    \setlength{\tabcolsep}{3pt}
    \newcommand{\rescell}[2]{\makecell{$#1$ {\scriptsize $\pm #2$}}}
    \newcommand{\taskcell}[1]{\multirow[c]{2}{*}[-0ex]{\makecell{#1}}}
    \begin{tabular}{clcccccc}
    
\toprule
& & \texttt{erm} & \texttt{p-reg} & \texttt{f-reg} & \texttt{sam} & \texttt{asam} & \texttt{fsam} \\
\midrule

\taskcell{\textsc{pendulum} \\ \textsc{time-series}} &

\footnotesize $\theta$-error \scriptsize ($\times 10^{-3}$) &
\rescell{19.9}{16.9} &
\rescell{\underline{2.55}}{0.10} &
\rescell{\mathbf{0.37}}{0.09} &
\rescell{4.00}{0.54} &
\rescell{3.35}{1.08} &
\rescell{3.74}{0.46} \\
& \footnotesize $y$-error \scriptsize ($\times 10^{-2}$) &
\rescell{9.11}{0.13} &
\rescell{8.00}{0.02} &
\rescell{43.1}{0.42} &
\rescell{8.21}{0.06} &
\rescell{8.11}{0.07} &
\rescell{8.20}{0.06} \\

\midrule

\taskcell{\hspace{-1em}\textsc{reaction} \\ \hspace{1em}\textsc{-diffusion}} &

\footnotesize $\theta$-error \scriptsize ($\times 10^{-4}$) &
\rescell{9.27}{0.69} &
\rescell{\mathbf{0.72}}{0.31} &
\rescell{\underline{1.26}}{0.66} &
\rescell{2.79}{1.97} &
\rescell{3.66}{1.67} &
\rescell{2.80}{1.98} \\
& \footnotesize $y$-error \scriptsize ($\times 10^{0}$) &
\rescell{1.11}{$<$0.01} &
\rescell{1.11}{$<$0.01} &
\rescell{1.11}{$<$0.01} &
\rescell{1.19}{0.01} &
\rescell{1.25}{0.02} &
\rescell{1.19}{0.01} \\

\midrule

\taskcell{\textsc{Duffing} \\ \textsc{oscillator}} &

\footnotesize $\theta$-error \scriptsize ($\times 10^{-2}$) &
\rescell{30.8}{2.25} &
\rescell{13.4}{2.41} &
\rescell{18.7}{3.67} &
\rescell{\mathbf{1.32}}{0.65} &
\rescell{\underline{3.22}}{1.75} &
\rescell{\mathbf{1.32}}{0.91} \\
& \footnotesize $y$-error \scriptsize ($\times 10^{-2}$) &
\rescell{6.74}{0.05} &
\rescell{6.45}{0.01} &
\rescell{6.75}{0.04} &
\rescell{6.43}{0.02} &
\rescell{6.46}{0.02} &
\rescell{6.43}{0.03} \\

\midrule

\taskcell{\textsc{pendulum} \\ \textsc{images}} &

\footnotesize $\theta$-error \scriptsize ($\times 10^{-2}$) &
\rescell{12.2}{5.77} &
\rescell{12.9}{2.79} &
\rescell{11.7}{5.66} &
\rescell{5.83}{3.86} &
\rescell{\underline{5.48}}{3.39} &
\rescell{\mathbf{5.04}}{4.25} \\
& \footnotesize $y$-error \scriptsize ($\times 10^{-0}$) &
\rescell{9.07}{0.14} &
\rescell{9.13}{0.27} &
\rescell{8.98}{0.12} &
\rescell{9.75}{0.30} &
\rescell{9.57}{0.15} &
\rescell{9.59}{0.29} \\

\midrule

\taskcell{\textsc{wind} \\ \textsc{tunnel}} &

\footnotesize $\theta$-error \scriptsize ($\times 10^{-2}$) &
\rescell{69.6}{0.18} &
\rescell{69.5}{0.29} &
\rescell{4.39}{0.09} &
\rescell{\underline{2.15}}{0.98} &
\rescell{2.39}{0.77} &
\rescell{\mathbf{1.90}}{0.67} \\
& \footnotesize $y$-error \scriptsize ($\times 10^{1}$) &
\rescell{5.31}{0.03} &
\rescell{5.30}{0.02} &
\rescell{5.28}{0.03} &
\rescell{5.11}{0.01} &
\rescell{5.09}{0.02} &
\rescell{5.09}{0.01} \\

\midrule

\taskcell{\textsc{light} \\ \textsc{tunnel}} &

\footnotesize $\theta$ cosine sim. $\uparrow$ &
\rescell{0.76}{0.27} &
\rescell{0.87}{0.23} &
\rescell{\underline{0.96}}{0.01} &
\rescell{\mathbf{0.98}}{0.01} &
\rescell{\mathbf{0.98}}{0.01} &
\rescell{\mathbf{0.98}}{0.01} \\
& \footnotesize $y$-error \scriptsize ($\times 10^{0}$) &
\rescell{4.16}{0.06} &
\rescell{4.29}{0.07} &
\rescell{4.21}{0.05} &
\rescell{5.03}{0.14} &
\rescell{8.22}{0.30} &
\rescell{4.61}{0.02} \\

\bottomrule
        
    \end{tabular}
    \vspace*{-2ex}
    
\end{table}

\paragraph{Prediction and estimation errors (Table~\ref*{tab:main_result})}
\Cref{tab:main_result} reports the test prediction error ($y$-error) and the estimation error of $\theta$ ($\theta$-error).
The magnitude of the values can be guessed in comparison to the values in the \texttt{erm} column.
\textbf{The SAM-based methods achieved better or comparable performance in all the tasks, although they do not assume specific hybrid model architectures}.
SAM family shows slightly worse $y$-errors in some cases, but the degradation is mostly qualitatively insignificant, and if needed we can re-train or finetune $\phi$ after fixing $\theta$ to improve the $y$-error.
More specifically:
\begin{itemize}[leftmargin=*,topsep=0pt,itemsep=0pt]
    \item For \textsc{pendulum time-series} and \textsc{reaction-diffusion}, the method of \citet{yinAugmentingPhysicalModels2021} was implemented as \texttt{f-reg}.
    It should work well because the hybrid models are in the additive form, as already confirmed in \citep{yinAugmentingPhysicalModels2021}.
    In these tasks, the performance of the SAM-based methods is slightly worse than \texttt{p-reg} and \texttt{f-reg} but is significantly improved from \texttt{erm}.
    Interestingly, for the \textsc{reaction-diffusion} task, the mere $\ell_2$ regularization (\texttt{p-reg}) worked better than \texttt{f-reg}.
    \item For \textsc{pendulum images}, \textsc{wind tunnel}, and \textsc{light tunnel}, where the models are non-additive, the baseline methods did not work well.
    It is not surprising because the baseline regularizers need special design for each architecture, which requires careful tuning to achieve good $\theta$ estimation.
    \item The result for \textsc{Duffing oscillator} is interesting. Although the model was additive, the function norm-based \texttt{f-reg} did not estimate $\theta$ well, while the SAM-based methods were successful.
\end{itemize}

\begin{figure}[t]
    \begin{minipage}[t]{0.52\textwidth}
        \vspace*{0pt}
        \centering
        \setlength{\tabcolsep}{0.5pt}
        \renewcommand{\arraystretch}{1}
        \newcommand{\imgw}{0.126\textwidth}
        \newcommand{\vlabel}[1]{\raisebox{0.055\textwidth}{\scriptsize $#1$}}
        {\small\begin{tabular}{cccccccccccc}
            \rotatebox{90}{\hspace{5pt}truth}\hspace{5pt} &
            &
            \includegraphics[width=\imgw]{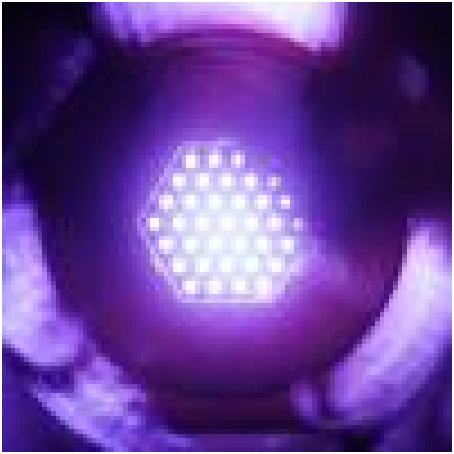} &
            \includegraphics[width=\imgw]{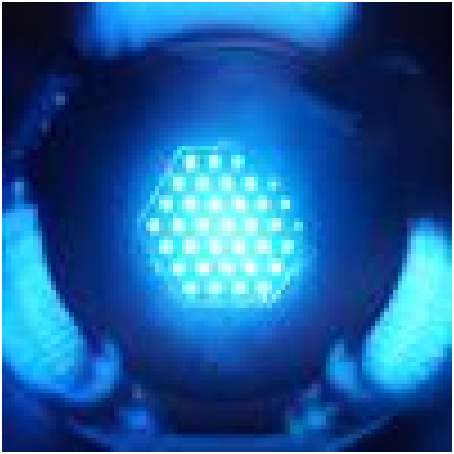} &
            \includegraphics[width=\imgw]{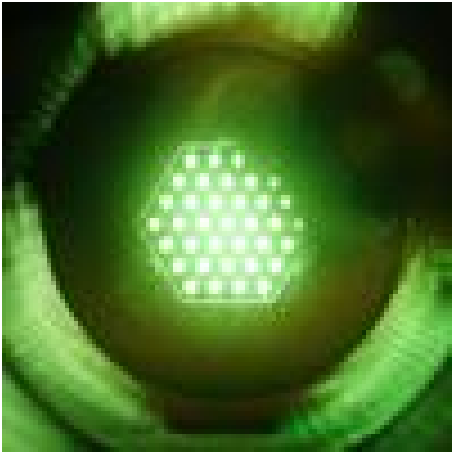} &
            \includegraphics[width=\imgw]{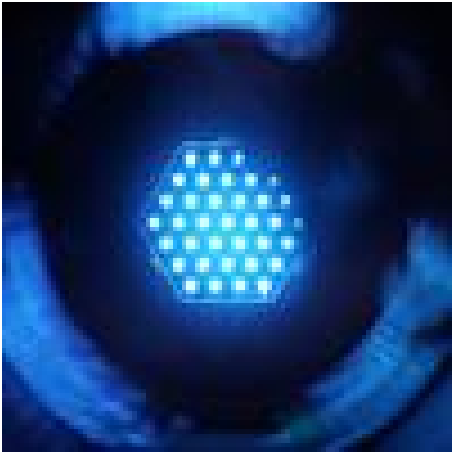} &
            \includegraphics[width=\imgw]{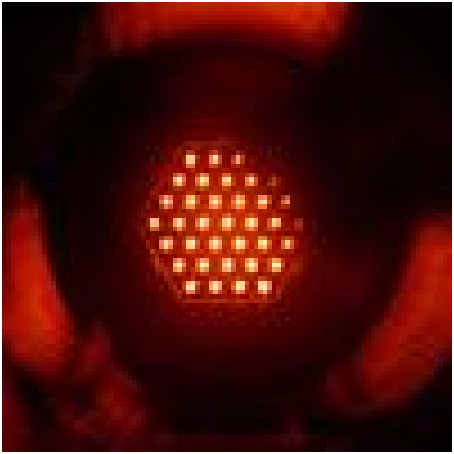} &
            \includegraphics[width=\imgw]{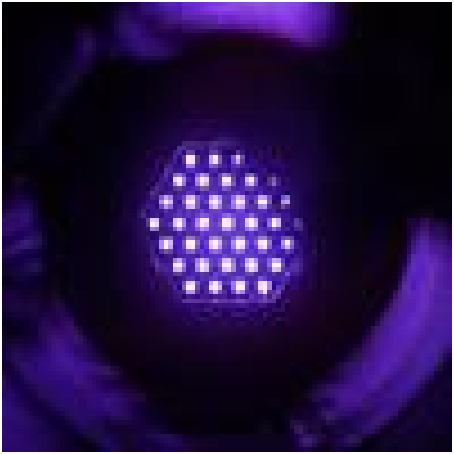} &
            \includegraphics[width=\imgw]{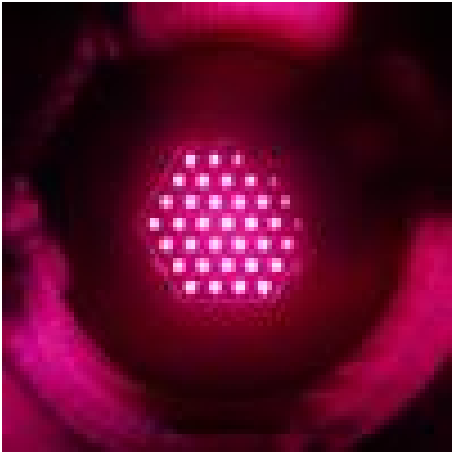} \\[5pt]

            \multirow{2}{*}{\rotatebox{90}{\texttt{erm}}\hspace{5pt}} &
            \vlabel{\tilde{z}} &
            \includegraphics[width=\imgw]{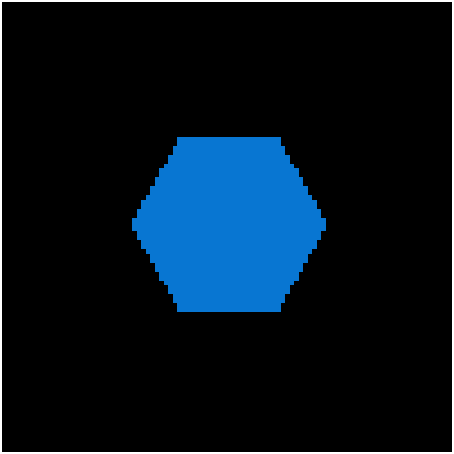} &
            \includegraphics[width=\imgw]{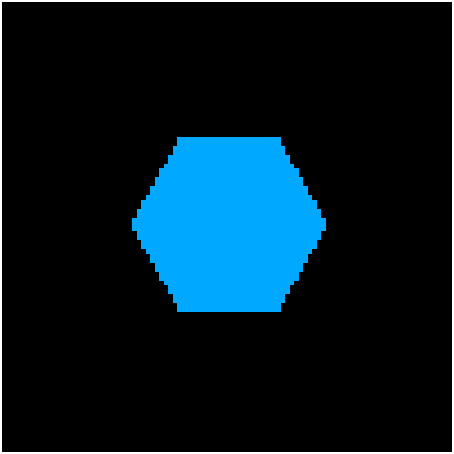} &
            \includegraphics[width=\imgw]{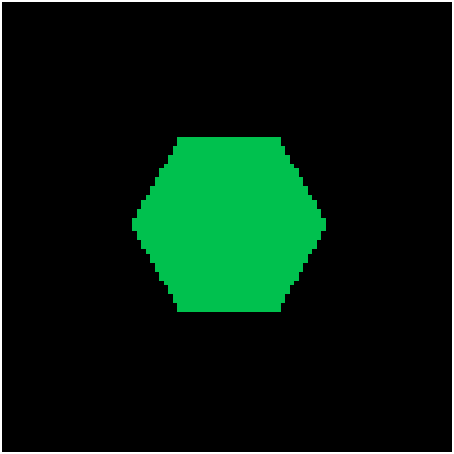} &
            \includegraphics[width=\imgw]{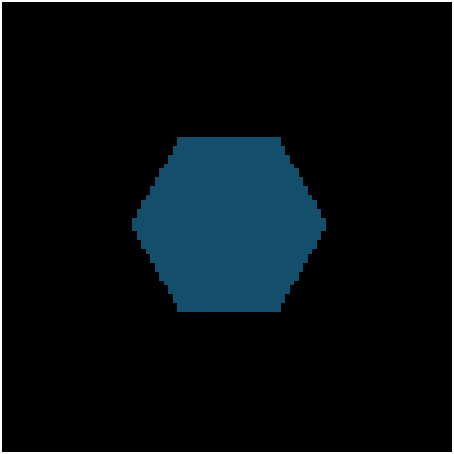} &
            \includegraphics[width=\imgw]{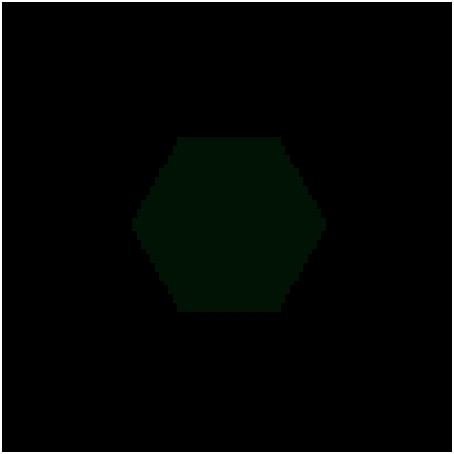} &
            \includegraphics[width=\imgw]{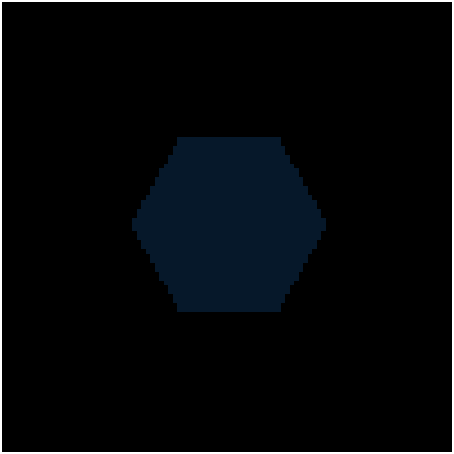} &
            \includegraphics[width=\imgw]{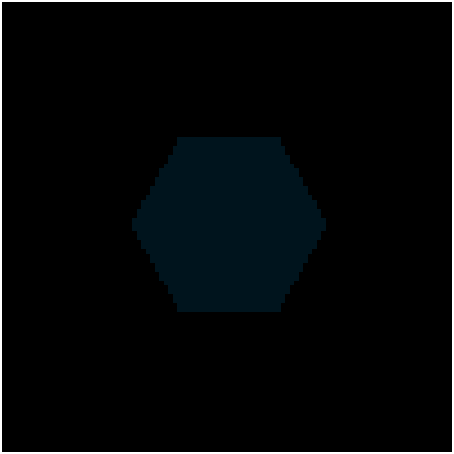} \\

            &
            \vlabel{\tilde{y}} &
            \includegraphics[width=\imgw]{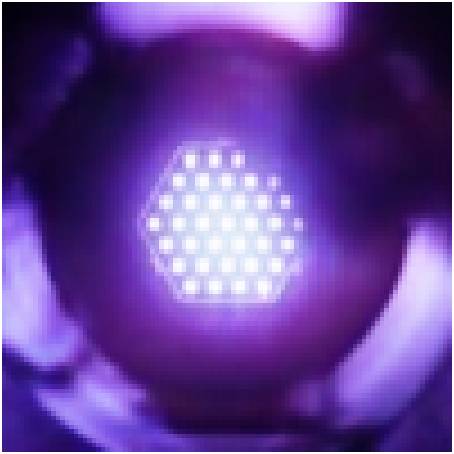} &
            \includegraphics[width=\imgw]{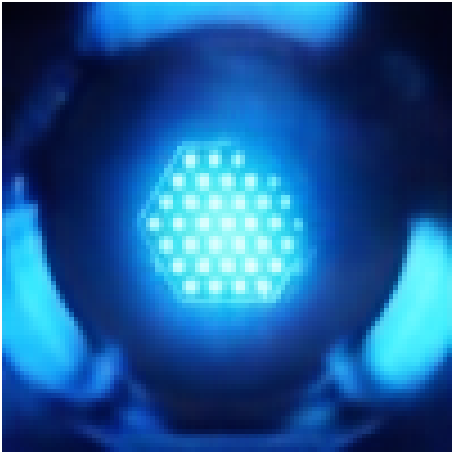} &
            \includegraphics[width=\imgw]{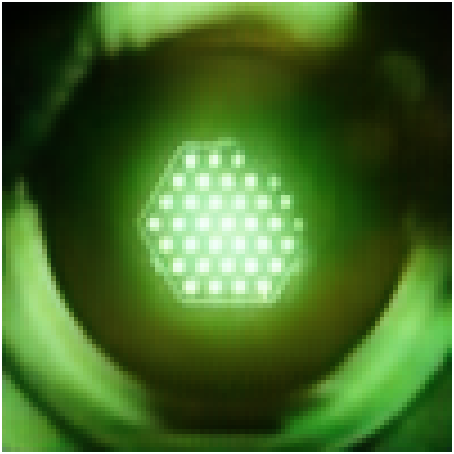} &
            \includegraphics[width=\imgw]{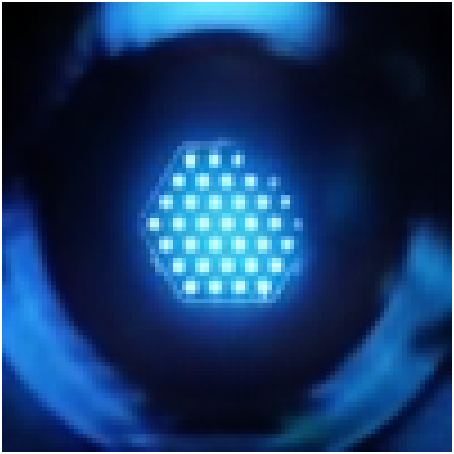} &
            \includegraphics[width=\imgw]{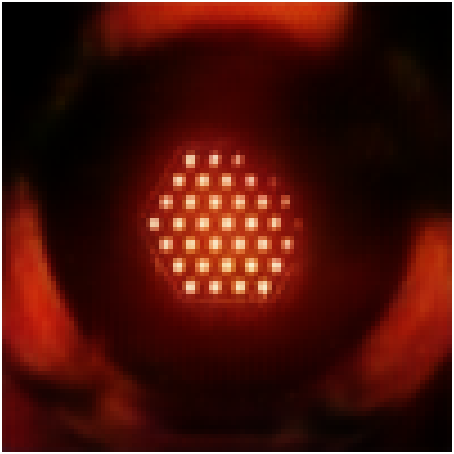} &
            \includegraphics[width=\imgw]{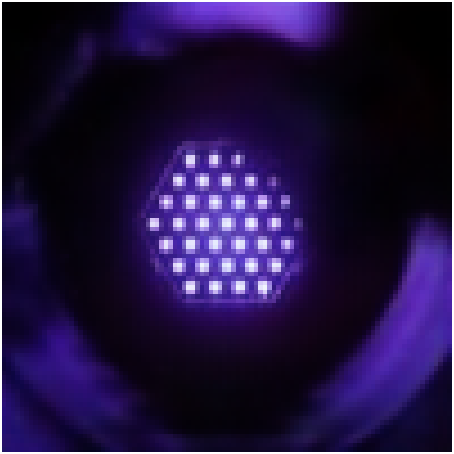} &
            \includegraphics[width=\imgw]{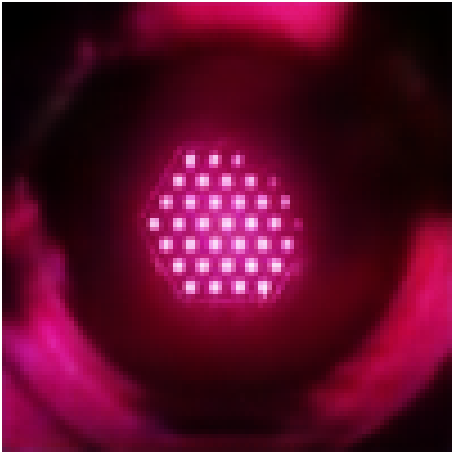} \\[5pt]

            \multirow{2}{*}{\rotatebox{90}{\texttt{fsam}}\hspace{5pt}} &
            \vlabel{\tilde{z}} &
            \includegraphics[width=\imgw]{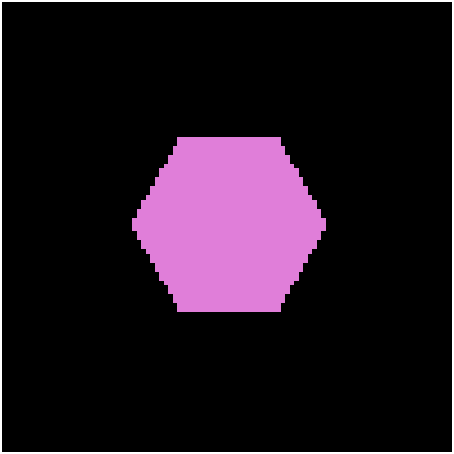} &
            \includegraphics[width=\imgw]{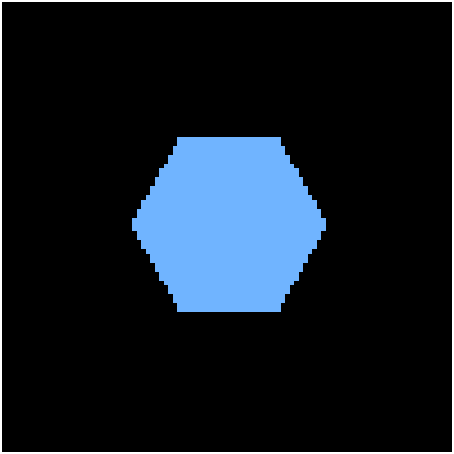} &
            \includegraphics[width=\imgw]{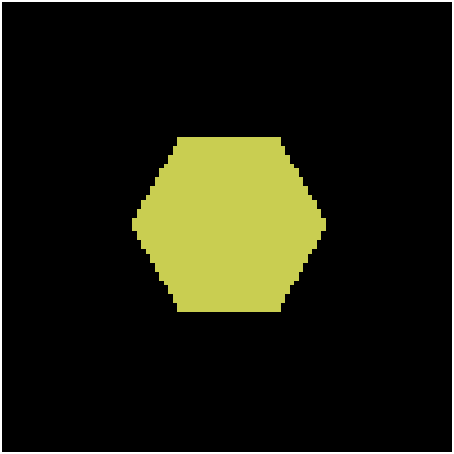} &
            \includegraphics[width=\imgw]{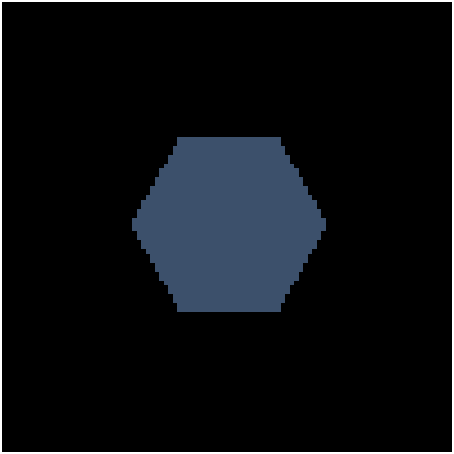} &
            \includegraphics[width=\imgw]{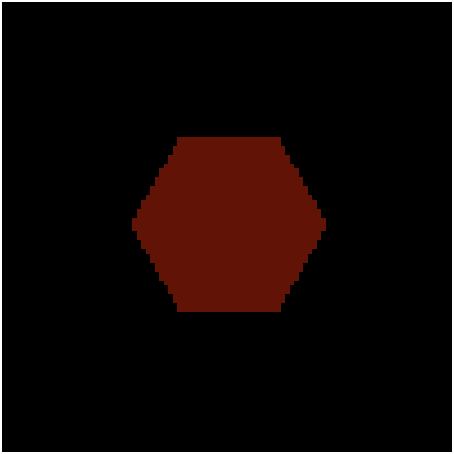} &
            \includegraphics[width=\imgw]{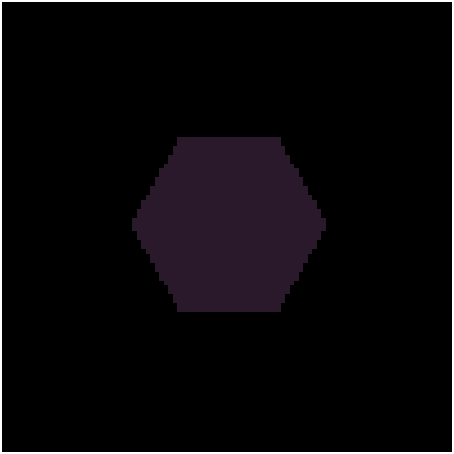} &
            \includegraphics[width=\imgw]{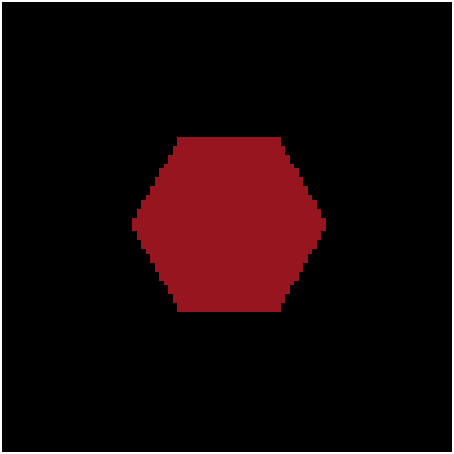} \\

            &
            \vlabel{\tilde{y}} &
            \includegraphics[width=\imgw]{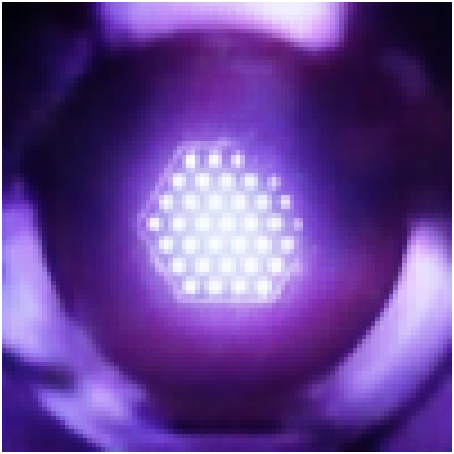} &
            \includegraphics[width=\imgw]{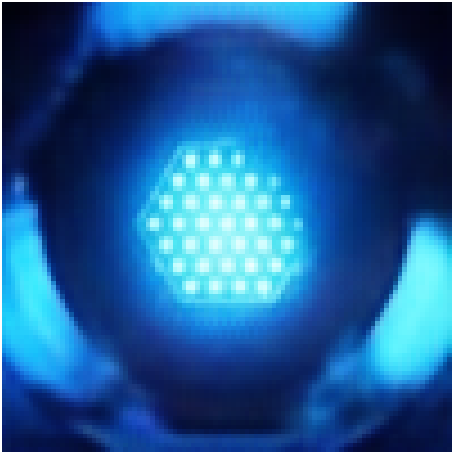} &
            \includegraphics[width=\imgw]{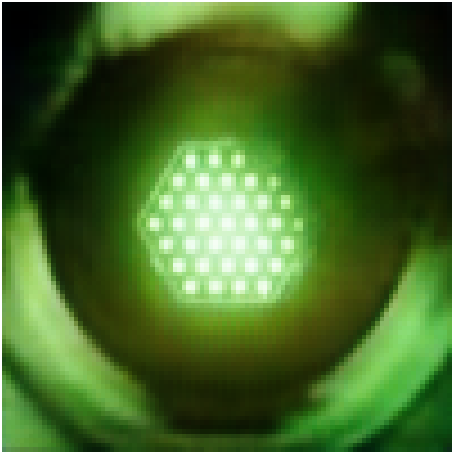} &
            \includegraphics[width=\imgw]{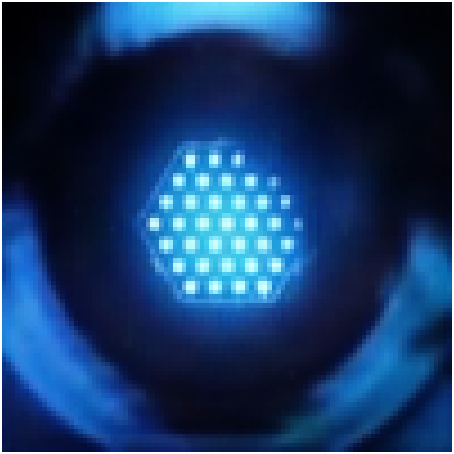} &
            \includegraphics[width=\imgw]{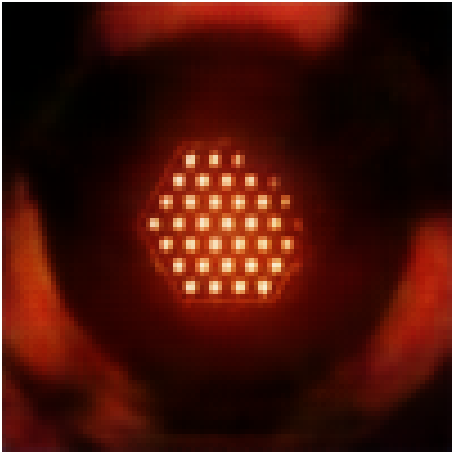} &
            \includegraphics[width=\imgw]{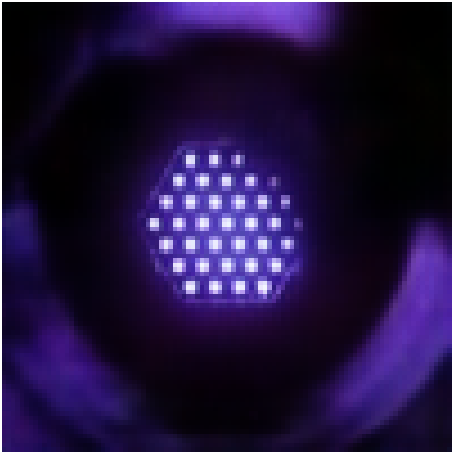} &
            \includegraphics[width=\imgw]{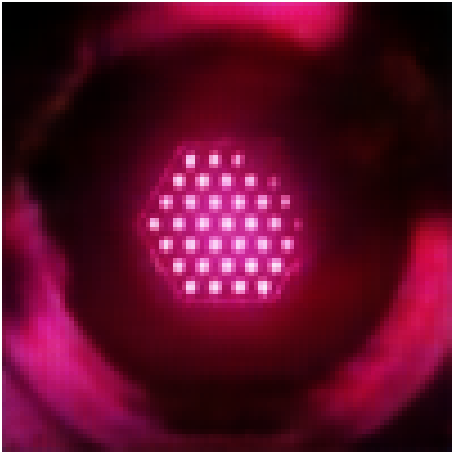} \\

        \end{tabular}}
        \caption{Data and predictions in the \textsc{light tunnel} task: (\emph{top row}) ground truth images; (\emph{middle two rows}) predictions from the \texttt{erm} model; and (\emph{bottom two rows}) predictions from the \texttt{fsam} model. For each model, the upper row shows the predictions $\tilde{z}$ from the scientific model part (model F2), and the lower row shows the full predictions $\tilde{y}$.}
        \label{fig:light_tunnel_summary}
    \end{minipage}
    \hfill
    \begin{minipage}[t]{0.44\textwidth}
        \vspace*{0pt}
        \centering
        \includegraphics[trim={0 10pt 0 0},clip,width=0.49\linewidth]{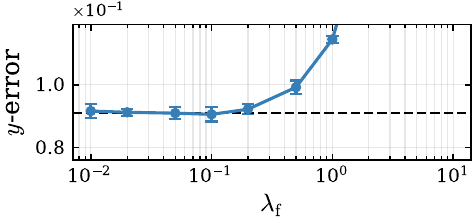}
        \hfill
        \includegraphics[trim={10pt 10pt 0 0},clip,width=0.47\linewidth]{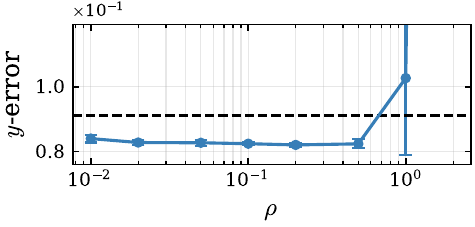}
        \\
        \includegraphics[trim={0 0 0 0},clip,width=0.49\linewidth]{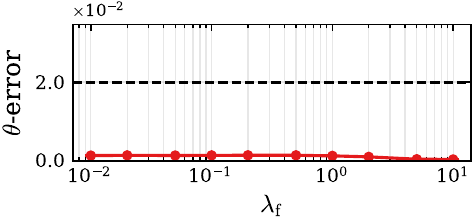}
        \hfill
        \includegraphics[trim={10pt 0 0 0},clip,width=0.47\linewidth]{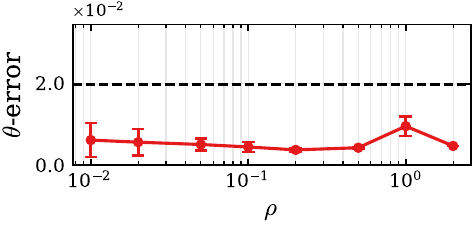}
        \\[-1pt]
        \scriptsize{(a) \textsc{pendulum time-series}: (\emph{left}) \texttt{freg} and (\emph
        {right}) \texttt{fsam}}
        \\[1ex]
        \includegraphics[trim={0 10pt 0 0},clip,width=0.49\linewidth]{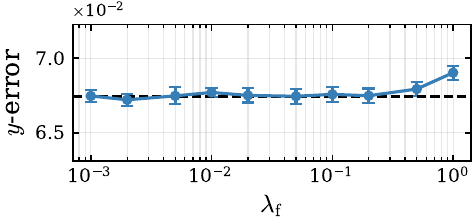}
        \hfill
        \includegraphics[trim={10pt 10pt 0 0},clip,width=0.47\linewidth]{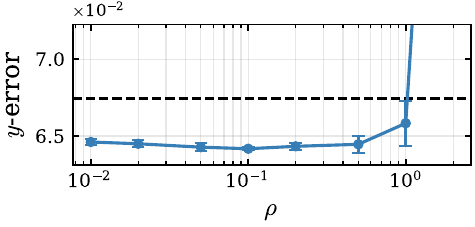}
        \\
        \includegraphics[trim={0 0 0 0},clip,width=0.49\linewidth]{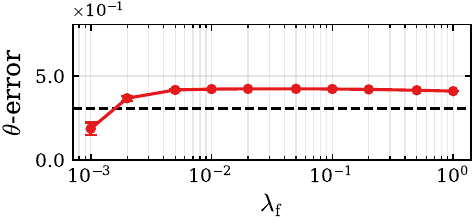}
        \hfill
        \includegraphics[trim={10pt 0 0 0},clip,width=0.47\linewidth]{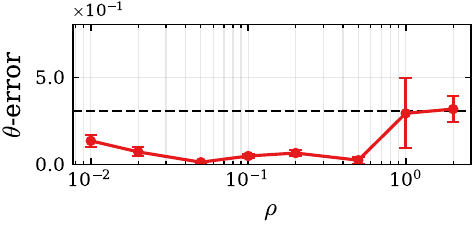}
        \\[-1pt]
        \scriptsize{(b) \textsc{Duffing oscillator}: (\emph{left}) \texttt{freg} and (\emph
        {right}) \texttt{fsam}}
        \caption{Effects of the configuration of hyperparameters: $\lambda_\mathrm{f}$ for \texttt{f-reg} and $\rho$ for \texttt{fsam}. The dashed lines represent the average performance of \texttt{erm}. Full results are presented in the appendix.}
        \label{fig:hyperparam_sensitivity}
    \end{minipage}
    \vspace*{-2.5ex}
\end{figure}

\paragraph{Model behaviors (Figure~\ref*{fig:light_tunnel_summary})}
\cref{fig:light_tunnel_summary} exemplifies the predictions in the \textsc{light tunnel} task, where for each method, the upper row is the prediction by the scientific model part, $\tilde{z} := f_{\tilde\theta}(x)$, whereas the lower row is the full prediction by the hybrid model, $\tilde{y} = h_{\tilde\theta,\tilde\phi}(x)$.
The light colors predicted as $\tilde{z}$ by the \texttt{erm} model sometimes deviate from the truth, implying the inadequate use of the scientific model due to the misestimation of $\theta$.
The colors in $\tilde{z}$ of the \texttt{fsam} model align well with the truth.

\paragraph{Hyperparameter sensitivity (Figure~\ref*{fig:hyperparam_sensitivity})}
We investigate the sensitivity of the performance to the configuration of the hyperparameters: $\lambda_\mathrm{p}$, $\lambda_\mathrm{f}$, and $\rho_\phi$ for the \texttt{p-reg}, \texttt{f-reg}, and \{\texttt{sam}, \texttt{asam}, \texttt{fsam}\}, respectively.
We show a part of the results in \cref{fig:hyperparam_sensitivity}; the remaining results are in Appendix~\ref{appendix:results}.
As a reference we also show the performance by the \texttt{erm} model with dashed lines.
In terms of the $\theta$-estimation error, the performance is not very sensitive to the hyperparameter value within the standard range of $\rho$ \citep{foretSharpnessawareMinimizationEfficiently2021,kwonASAMAdaptiveSharpnessaware2021,kimFisherSAMInformation2022}.
On the other hand, the $y$-prediction error gets significantly worse than the reference by \texttt{erm} when the regularization effect becomes too strong.

\paragraph{Effect of SAM modification (Table~\ref*{table:hybrid_sam_effect})}
\begin{wraptable}[9]{r}[0pt]{.43\textwidth}
    \centering
    \vspace*{-2ex}
    \caption{$\theta$-error by ordinary Fisher SAM}
    \vspace*{2pt}
    \label{table:hybrid_sam_effect}
    \renewcommand{\arraystretch}{1}
    \setlength{\tabcolsep}{3pt}
    \newcommand{\taskcell}[1]{\makecell{#1}}
    \newcommand{\rescell}[2]{\makecell{$#1$ {\scriptsize $\pm #2$}}}
    {\footnotesize\begin{tabular}{llcc}
        \toprule
        & & \texttt{fsam} & ordinary \\
        \midrule
        \taskcell{\textsc{pend t-s}} &
        ($\times 10^{-3}$) &
        \rescell{\mathbf{3.7}}{0.5} &
        \rescell{37}{7.1} \\
        \taskcell{\textsc{react-diff}} &
        ($\times 10^{-4}$) &
        \rescell{\mathbf{2.8}}{2.0} &
        \rescell{\mathbf{2.8}}{1.1} \\
        \taskcell{\textsc{duffing}} &
        ($\times 10^{-2}$) &
        \rescell{\mathbf{1.3}}{0.9} &
        \rescell{1.4}{1.0} \\
        \taskcell{\textsc{pend image}} &
        ($\times 10^{-2}$) &
        \rescell{5.0}{4.3} &
        \rescell{\mathbf{3.3}}{2.6} \\
        \taskcell{\textsc{wind}} &
        ($\times 10^{-2}$) &
        \rescell{\mathbf{1.9}}{0.7} &
        \rescell{5.4}{1.5} \\
        \taskcell{\textsc{light}} &
        (cos-sim $\uparrow$) &
        \rescell{\mathbf{.98}}{.01} &
        \rescell{.97}{.02} \\
        \bottomrule
    \end{tabular}}
\end{wraptable}
In \cref{alg:main}, we perturb only $\phi$ and leave $\theta$ unperturbed, whereas ordinary SAM methods perturb all the parameters.
We examined the practical effect of such a modification.
\cref{table:hybrid_sam_effect} summarizes the $\theta$-estimation error by our application of Fisher SAM (perturbing only $\phi$; \texttt{fsam}) and the ordinary Fisher SAM (perturbing $\phi$ and $\theta$).
For the \textsc{pendulum time-series} and \textsc{wind tunnel} tasks, the modified method, \texttt{fsam}, achieves clearly smaller error than the ordinary case.
The effect of limiting the perturbation to $\phi$ appears marginal in some cases, as we anticipated earlier.
For the \textsc{pendulum images} task, the ordinary Fisher SAM performs slightly better, though not significantly so.
The neural net for this task has a relatively large number of parameters for encoding and decoding the image time series, for which the benefit of limiting the perturbation might have been small and lost in the random variation.

\section{Conclusion}

In this paper, we suggest focusing on loss flatness in hybrid modeling, the combination of scientific mathematical models and machine learning models.
The underlying idea is that the machine learning part of a hybrid model should be kept as simple as possible for the maximum use of the scientific model part.
We have hypothesized that SAM is useful to achieve such a simple model.
We have empirically validated the effectiveness of the idea on different tasks using synthetic and real-world data.
An overall observation from the experiments is that SAM is indeed useful for identifying unknown parameters of the scientific model part, despite not requiring specific hybrid model architectures.
The SAM-based learning method for hybrid models works in an architecture-agnostic manner, unlike the existing regularization methods that require specific design depending on the model architecture.

An important limitation is that unless one has specific assumptions about the data-generating process, the proposed learning method does not necessarily identify \emph{the correct} value of the unknown scientific parameters, as is the case with the existing methods for hybrid modeling.
It is something essential as there is often no such thing like the correct scientific parameter; the ``correctness'' solely depends on one's beliefs about the data-generating process and the principles of model selection.
To overcome the fundamental limitation, one has to develop a specific method and theory for particular hybrid model architectures.
Establishing a rigorous and universal theory of identification for general hybrid models is an interesting yet challenging open problem.

\section*{Acknowledgements}

This work was supported by JST PRESTO JPMJPR24T6 and JSPS JP25H01454.

% \section*{Impact Statement}

% This work proposes a method for learning hybrid models that combine scientific mathematical models with machine learning models.
% While it is difficult to anticipate broader societal consequences at this stage of fundamental research, hybrid modeling is often motivated by the need for more interpretable and reliable machine learning.
% If deployed in critical settings without sufficient validation, however, hybrid models could still lead to negative outcomes.
% Therefore, careful evaluation and responsible use are important.

\bibliography{main,books}

@book{dupuis_ellis_1997,
    author={Paul Dupuis and Richard S. Ellis},
    title={A Weak Convergence Approach to the Theory of Large Deviations},
    publisher={Wiley},
    year={1997}
}

@inproceedings{ajayAugmentingPhysicalSimulators2018,
  title = {Augmenting Physical Simulators with Stochastic Neural Networks: {{Case}} Study of Planar Pushing and Bouncing},
  booktitle = {Proceedings of the 2018 {{IEEE}}/{{RSJ International Conference}} on {{Intelligent Robots}} and {{Systems}}},
  author = {Ajay, Anurag and Wu, Jiajun and Fazeli, Nima and Bauza, Maria and Kaelbling, Leslie P. and Tenenbaum, Joshua B. and Rodriguez, Alberto},
  year = 2018,
  pages = {3066--3073}
}

@article{anstett-collinPrioriIdentifiabilityOverview2020,
  title = {A Priori Identifiability: {{An}} Overview on Definitions and Approaches},
  author = {{Anstett-Collin}, F. and {Denis-Vidal}, L. and Mill{\'e}rioux, G.},
  year = 2020,
  journal = {Annual Reviews in Control},
  volume = {50},
  pages = {139--149}
}

@inproceedings{arikInterpretableSequenceLearning2020,
  title = {Interpretable Sequence Learning for {{COVID-19}} Forecasting},
  booktitle = {Advances in {{Neural Information Processing Systems}} 33},
  author = {Ar{\i}k, Sercan {\"O}. and Li, Chun-Liang and Yoon, Jinsung and Sinha, Rajarishi and Epshteyn, Arkady and Le, Long T. and Menon, Vikas and Singh, Shashank and Zhang, Leyou and Yoder, Nate and Nikoltchev, Martin and Sonthalia, Yash and Nakhost, Hootan and Kanal, Elli and Pfister, Tomas},
  year = 2020,
  pages = {18807--18818}
}

@article{baldassiUnreasonableEffectivenessLearning2016,
  title = {Unreasonable Effectiveness of Learning Neural Networks: {{From}} Accessible States and Robust Ensembles to Basic Algorithmic Schemes},
  author = {Baldassi, Carlo and Borgs, Christian and Chayes, Jennifer T. and Ingrosso, Alessandro and Lucibello, Carlo and Saglietti, Luca and Zecchina, Riccardo},
  year = 2016,
  journal = {Proceedings of the National Academy of Sciences},
  volume = {113},
  number = {48},
  pages = {E7655-E7662},
  publisher = {Proceedings of the National Academy of Sciences}
}

@inproceedings{chaudhariEntropySGDBiasingGradient2017,
  title = {Entropy-{{SGD}}: Biasing Gradient Descent into Wide Valleys},
  booktitle = {Proceedings of the 5th {{International Conference}} on {{Learning Representations}}},
  author = {Chaudhari, Pratik and Choromanska, Anna and Soatto, Stefano and LeCun, Yann and Baldassi, Carlo and Borgs, Christian and Chayes, Jennifer and Sagun, Levent and Zecchina, Riccardo},
  year = 2017
}

@article{claesHybridAdditiveModeling2025,
  title = {Hybrid Additive Modeling with Partial Dependence for Supervised Regression and Dynamical Systems Forecasting},
  author = {Claes, Yann and {Huynh-Thu}, V{\^a}n Anh and Geurts, Pierre},
  year = 2025,
  journal = {Machine Learning},
  volume = {114},
  number = {3},
  pages = {72}
}

@article{cohrsCausalHybridModeling2024,
  title = {Causal Hybrid Modeling with Double Machine Learning---Applications in Carbon Flux Modeling},
  author = {Cohrs, Kai-Hendrik and Varando, Gherardo and Carvalhais, Nuno and Reichstein, Markus and {Camps-Valls}, Gustau},
  year = 2024,
  journal = {Machine Learning: Science and Technology},
  volume = {5},
  number = {3},
  pages = {035021},
  publisher = {arXiv}
}

@inproceedings{dinhSharpMinimaCan2017,
  title = {Sharp Minima Can Generalize for Deep Nets},
  booktitle = {Proceedings of the 34th {{International Conference}} on {{Machine Learning}}},
  author = {Dinh, Laurent and Pascanu, Razvan and Bengio, Samy and Bengio, Yoshua},
  year = 2017,
  pages = {1019--1028}
}

@inproceedings{dziugaiteComputingNonvacuousGeneralization2017,
  title = {Computing Nonvacuous Generalization Bounds for Deep (Stochastic) Neural Networks with Many More Parameters than Training Data},
  booktitle = {Proceedings of the 33rd {{Conference}} on {{Uncertainty}} in {{Artificial Intelligence}}},
  author = {Dziugaite, Gintare Karolina and Roy, Daniel M.},
  year = 2017
}

@inproceedings{foretSharpnessawareMinimizationEfficiently2021,
  title = {Sharpness-Aware Minimization for Efficiently Improving Generalization},
  booktitle = {Proceedings of the 9th {{International Conference}} on {{Learning Representations}}},
  author = {Foret, Pierre and Kleiner, Ariel and Mobahi, Hossein and Neyshabur, Behnam},
  year = 2021
}

@article{forssellCombiningSemiphysicalNeural1997,
  title = {Combining Semi-Physical and Neural Network Modeling: {{An}} Example of Its Usefulness},
  author = {Forssell, U. and Lindskog, P.},
  year = 1997,
  journal = {IFAC Proceedings Volumes},
  volume = {30},
  number = {11},
  pages = {767--770}
}

@article{gamellaCausalChambersRealworld2025,
  title = {Causal Chambers as a Real-World Physical Testbed for {{AI}} Methodology},
  author = {Gamella, Juan L. and Peters, Jonas and B{\"u}hlmann, Peter},
  year = 2025,
  journal = {Nature Machine Intelligence},
  volume = {7},
  pages = {107--118}
}

@article{gaoSimtorealSoftRobots2024,
  title = {Sim-to-Real of Soft Robots with Learned Residual Physics},
  author = {Gao, Junpeng and Michelis, Mike Y. and Spielberg, Andrew and Katzschmann, Robert K.},
  year = 2024,
  journal = {IEEE Robotics and Automation Letters},
  volume = {9},
  number = {10},
  pages = {8523--8530}
}

@article{giampiccoloRobustParameterEstimation2024,
  title = {Robust Parameter Estimation and Identifiability Analysis with Hybrid Neural Ordinary Differential Equations in Computational Biology},
  author = {Giampiccolo, Stefano and Reali, Federico and Fochesato, Anna and Iacca, Giovanni and Marchetti, Luca},
  year = 2024,
  journal = {npj Systems Biology and Applications},
  volume = {10},
  number = {1},
  pages = {139}
}

@inproceedings{gravesPracticalVariationalInference2011a,
  title = {Practical Variational Inference for Neural Networks},
  booktitle = {Advances in {{Neural Information Processing Systems}} 24},
  author = {Graves, Alex},
  year = 2011,
  pages = {2348--2356}
}

@article{guillaumeIntroductoryOverviewIdentifiability2019,
  title = {Introductory Overview of Identifiability Analysis: {{A}} Guide to Evaluating Whether You Have the Right Type of Data for Your Modeling Purpose},
  author = {Guillaume, Joseph H.A. and Jakeman, John D. and {Marsili-Libelli}, Stefano and Asher, Michael and Brunner, Philip and Croke, Barry and Hill, Mary C. and Jakeman, Anthony J. and Keesman, Karel J. and Razavi, Saman and Stigter, Johannes D.},
  year = 2019,
  journal = {Environmental Modelling \& Software},
  volume = {119},
  pages = {418--432}
}

@inproceedings{haussmannLearningPartiallyKnown2021,
  title = {Learning Partially Known Stochastic Dynamics with Empirical {{PAC Bayes}}},
  booktitle = {Proceedings of the 24th {{International Conference}} on {{Artificial Intelligence}} and {{Statistics}}},
  author = {Hau{\ss}mann, Manuel and Gerwinn, Sebastian and Look, Andreas and Rakitsch, Barbara and Kandemir, Melih},
  year = 2021,
  pages = {478--486}
}

@inproceedings{heidenNeuralSimAugmentingDifferentiable2021,
  title = {{{NeuralSim}}: {{Augmenting}} Differentiable Simulators with Neural Networks},
  booktitle = {Proceedings of 2021 {{IEEE International Conference}} on {{Robotics}} and {{Automation}}},
  author = {Heiden, Eric and Millard, David and Coumans, Erwin and Sheng, Yizhou and Sukhatme, Gaurav S.},
  year = 2021,
  pages = {9474--9481}
}

@inproceedings{hintonAutoencodersMinimumDescription1993,
  title = {Autoencoders, Minimum Description Length and {{Helmholtz}} Free Energy},
  booktitle = {Advances in {{Neural Information Processing Systems}} 6},
  author = {Hinton, Geoffrey E and Zemel, Richard},
  year = 1993,
  pages = {3--10}
}

@inproceedings{hintonKeepingNeuralNetworks1993,
  title = {Keeping the Neural Networks Simple by Minimizing the Description Length of the Weights},
  booktitle = {Proceedings of the 6th {{Annual Conference}} on {{Computational Learning Theory}}},
  author = {Hinton, Geoffrey E. and {van Camp}, Drew},
  year = 1993,
  pages = {5--13}
}

@article{hochreiterFlatMinima1997,
  title = {Flat Minima},
  author = {Hochreiter, Sepp and Schmidhuber, J{\"u}rgen},
  year = 1997,
  journal = {Neural Computation},
  volume = {9},
  number = {1},
  pages = {1--42}
}

@inproceedings{holtAutomaticallyLearningHybrid2024,
  title = {Automatically Learning Hybrid Digital Twins of Dynamical Systems},
  booktitle = {Advances in {{Neural Information Processing Systems}} 37},
  author = {Holt, Samuel and Liu, Tennison and van der Schaar, Mihaela},
  year = 2024,
  pages = {72170--72218}
}

@article{honkelaVariationalLearningBitsback2004,
  title = {Variational Learning and Bits-Back Coding: {{An}} Information-Theoretic View to {{Bayesian}} Learning},
  shorttitle = {Variational Learning and Bits-Back Coding},
  author = {Honkela, Antti and Valpola, Harri},
  year = 2004,
  month = jul,
  journal = {IEEE Transactions on Neural Networks},
  volume = {15},
  number = {4},
  pages = {800--810},
  issn = {1941-0093},
  doi = {10.1109/TNN.2004.828762},
  urldate = {2026-04-26}
}

@inproceedings{izmailovAveragingWeightsLeads2018,
  title = {Averaging Weights Leads to Wider Optima and Better Generalization},
  booktitle = {Proceedings of the 34th {{Conference}} on {{Uncertainty}} in {{Artificial Intelligence}}},
  author = {Izmailov, Pavel and Podoprikhin, Dmitrii and Garipov, Timur and Vetrov, Dmitry and Wilson, Andrew Gordon},
  year = 2018
}

@inproceedings{kashtanovaDeepLearningModel2022,
  title = {Deep Learning for Model Correction in Cardiac Electrophysiological Imaging},
  booktitle = {Proceedings of the 5th {{International Conference}} on {{Medical Imaging}} with {{Deep Learning}}},
  author = {Kashtanova, Victoriya and Ayed, Ibrahim and Arrieula, Andony and Potse, Mark and Gallinari, Patrick and Sermesant, Maxime},
  year = 2022,
  pages = {665--675}
}

@inproceedings{keskarLargebatchTrainingDeep2017,
  title = {On Large-Batch Training for Deep Learning: {{Generalization}} Gap and Sharp Minima},
  booktitle = {Proceedings of the 5th {{International Conference}} on {{Learning Representations}}},
  author = {Keskar, Nitish Shirish and Mudigere, Dheevatsa and Nocedal, Jorge and Smelyanskiy, Mikhail and Tang, Ping Tak Peter},
  year = 2017
}

@inproceedings{kimFisherSAMInformation2022,
  title = {Fisher {{SAM}}: Information Geometry and Sharpness Aware Minimisation},
  booktitle = {Proceedings of the 39th {{International Conference}} on {{Machine Learning}}},
  author = {Kim, Minyoung and Li, Da and Hu, Shell X. and Hospedales, Timothy},
  year = 2022,
  pages = {11148--11161}
}

@inproceedings{kwonASAMAdaptiveSharpnessaware2021,
  title = {{{ASAM}}: {{Adaptive}} Sharpness-Aware Minimization for Scale-Invariant Learning of Deep Neural Networks},
  booktitle = {Proceedings of the 38th {{International Conference}} on {{Machine Learning}}},
  author = {Kwon, Jungmin and Kim, Jeongseop and Park, Hyunseo and Choi, In Kwon},
  year = 2021,
  pages = {5905--5914}
}

@inproceedings{mehtaNeuralDynamicalSystems2021,
  title = {Neural Dynamical Systems: {{Balancing}} Structure and Flexibility in Physical Prediction},
  booktitle = {Proceedings of the 2021 {{IEEE Conference}} on {{Decision}} and {{Control}}},
  author = {Mehta, Viraj and Char, Ian and Neiswanger, Willie and Chung, Youngseog and Nelson, Andrew Oakleigh and Boyer, Mark D. and Kolemen, Egemen and Schneider, Jeff},
  year = 2021,
  pages = {3735--3742},
  urldate = {2022-11-04}
}

@inproceedings{millerLearningInsulinglucoseDynamics2020,
  title = {Learning Insulin-Glucose Dynamics in the Wild},
  booktitle = {Proceedings of the 5th {{Machine Learning}} for {{Healthcare Conference}}},
  author = {Miller, Andrew C. and Foti, Nicholas J. and Fox, Emily},
  year = 2020,
  pages = {172--197}
}

@inproceedings{mollenhoffSAMOptimalRelaxation2022,
  title = {{{SAM}} as an Optimal Relaxation of {{Bayes}}},
  booktitle = {Proceedings of the 11th {{International Conference}} on {{Learning Representations}}},
  author = {M{\"o}llenhoff, Thomas and Khan, Mohammad Emtiyaz},
  year = 2022
}

@misc{palumboHybridModelingPhotoplethysmography2025,
  title = {Hybrid Modeling of Photoplethysmography for Non-Invasive Monitoring of Cardiovascular Parameters},
  author = {Palumbo, Emanuele and Saengkyongam, Sorawit and Cervera, Maria R. and Behrmann, Jens and Miller, Andrew C. and Sapiro, Guillermo and {Heinze-Deml}, Christina and Wehenkel, Antoine},
  year = 2025,
  number = {arXiv:2511.14452},
  eprint = {2511.14452},
  publisher = {arXiv},
  urldate = {2026-01-27},
  archiveprefix = {arXiv}
}

@article{psichogiosHybridNeuralNetworkfirst1992,
  title = {A Hybrid Neural Network-First Principles Approach to Process Modeling},
  author = {Psichogios, Dimitris C. and Ungar, Lyle H.},
  year = 1992,
  journal = {AIChE Journal},
  volume = {38},
  number = {10},
  pages = {1499--1511}
}

@inproceedings{qianIntegratingExpertODEs2021,
  title = {Integrating Expert {{ODEs}} into Neural {{ODEs}}: {{Pharmacology}} and Disease Progression},
  booktitle = {Advances in {{Neural Information Processing Systems}} 34},
  author = {Qian, Zhaozhi and Zame, William R. and Fleuren, Lucas M. and Elbers, Paul and {van der Schaar}, Mihaela},
  year = 2021,
  pages = {11364--11383}
}

@inproceedings{rico-martinezContinuoustimeNonlinearSignal1994,
  title = {Continuous-Time Nonlinear Signal Processing: {{A}} Neural Network Based Approach for Gray Box Identification},
  booktitle = {Proceedings of the {{IEEE Workshop}} on {{Neural Networks}} for {{Signal Processing}}},
  author = {{Rico-Mart{\'i}nez}, R. and Anderson, J. S. and Kevrekidis, I. G.},
  year = 1994,
  pages = {596--605}
}

@article{rissanenModelingShortestData1978,
  title = {Modeling by Shortest Data Description},
  author = {Rissanen, J.},
  year = 1978,
  journal = {Automatica},
  volume = {14},
  number = {5},
  pages = {465--471}
}

@article{rudolphHybridModelingDesign2024,
  title = {Hybrid Modeling Design Patterns},
  author = {Rudolph, Maja and Kurz, Stefan and Rakitsch, Barbara},
  year = 2024,
  journal = {Journal of Mathematics in Industry},
  volume = {14},
  number = {1},
  pages = {3}
}

@article{salzmannRealtimeNeuralMPC2023,
  title = {Real-Time Neural {{MPC}}: {{Deep}} Learning Model Predictive Control for Quadrotors and Agile Robotic Platforms},
  author = {Salzmann, Tim and Kaufmann, Elia and Arrizabalaga, Jon and Pavone, Marco and Scaramuzza, Davide and Ryll, Markus},
  year = 2023,
  journal = {IEEE Robotics and Automation Letters},
  volume = {8},
  number = {4},
  pages = {2397--2404}
}

@article{schweidtmannReviewPerspectiveHybrid2024,
  title = {A Review and Perspective on Hybrid Modeling Methodologies},
  author = {Schweidtmann, Artur M. and Zhang, Dongda and Von Stosch, Moritz},
  year = 2024,
  journal = {Digital Chemical Engineering},
  volume = {10},
  pages = {100136}
}

@inproceedings{senoufInductiveDomainTransfer2025,
  title = {Inductive Domain Transfer in Misspecified Simulation-Based Inference},
  booktitle = {Neural {{Information Processing Systems}} 38},
  author = {Senouf, Ortal and Wehenkel, Antoine and {Vincent-Cuaz}, C{\'e}dric and Abb{\'e}, Emmanuel and Frossard, Pascal},
  year = 2025
}

@inproceedings{shirakamiQTNetTheorybasedQueue2023,
  title = {{{QTNet}}: {{Theory-based}} Queue Length Prediction for Urban Traffic},
  booktitle = {Proceedings of the 29th {{ACM SIGKDD Conference}} on {{Knowledge Discovery}} and {{Data Mining}}},
  author = {Shirakami, Ryu and Kitahara, Toshiya and Takeuchi, Koh and Kashima, Hisashi},
  year = 2023,
  pages = {4832--4841}
}

@inproceedings{singhVariationalGreyboxDynamics2026,
  title = {Variational Grey-Box Dynamics Matching},
  booktitle = {Proceedings of the 29th {{International Conference}} on {{Artificial Intelligence}} and {{Statistics}}},
  author = {Singh, Gurjeet Sangra and Lavda, Frantzeska and Mercatali, Giangiacomo and Kalousis, Alexandros},
  year = {2026}
}

@inproceedings{takeishiDeepGreyboxModeling2023,
  title = {Deep Grey-Box Modeling with Adaptive Data-Driven Models toward Trustworthy Estimation of Theory-Driven Models},
  booktitle = {Proceedings of the 26th {{International Conference}} on {{Artificial Intelligence}} and {{Statistics}}},
  author = {Takeishi, Naoya and Kalousis, Alexandros},
  year = 2023,
  pages = {4089--4100},
  copyright = {All rights reserved}
}

@inproceedings{takeishiPhysicsintegratedVariationalAutoencoders2021,
  title = {Physics-Integrated Variational Autoencoders for Robust and Interpretable Generative Modeling},
  booktitle = {Advances in {{Neural Information Processing Systems}} 34},
  author = {Takeishi, Naoya and Kalousis, Alexandros},
  year = 2021,
  pages = {14809--14821}
}

@article{thompsonModelingChemicalProcesses1994,
  title = {Modeling Chemical Processes Using Prior Knowledge and Neural Networks},
  author = {Thompson, Michael L. and Kramer, Mark A.},
  year = 1994,
  journal = {AIChE Journal},
  volume = {40},
  number = {8},
  pages = {1328--1340}
}

@article{thoreauPhysicsinformedVariationalAutoencoders2025,
  title = {Physics-Informed Variational Autoencoders for Improved Robustness to Environmental Factors of Variation},
  author = {Thoreau, Romain and Risser, Laurent and Achard, V{\'e}ronique and Berthelot, B{\'e}atrice and Briottet, Xavier},
  year = 2025,
  journal = {Machine Learning},
  volume = {114},
  pages = {198}
}

@inproceedings{tsuzukuNormalizedFlatMinima2020,
  title = {Normalized Flat Minima: {{Exploring}} Scale Invariant Definition of Flat Minima for Neural Networks Using {{PAC-bayesian}} Analysis},
  booktitle = {Proceedings of the 37th {{International Conference}} on {{Machine Learning}}},
  author = {Tsuzuku, Yusuke and Sato, Issei and Sugiyama, Masashi},
  year = 2020,
  pages = {9636--9647}
}

@article{tusharDeepPhysicsCorrector2023,
  title = {Deep Physics Corrector: {{A}} Physics Enhanced Deep Learning Architecture for Solving Stochastic Differential Equations},
  author = {{Tushar} and Chakraborty, Souvik},
  year = 2023,
  journal = {Journal of Computational Physics},
  volume = {479},
  pages = {112004}
}

@inproceedings{vermaClimODEClimateWeather2024,
  title = {{{ClimODE}}: {{Climate}} and Weather Forecasting with Physics-Informed Neural {{ODEs}}},
  booktitle = {Proceedings of the 12th {{International Conference}} on {{Learning Representations}}},
  author = {Verma, Yogesh and Heinonen, Markus and Garg, Vikas},
  year = 2024
}

@inproceedings{wehenkelAddressingMisspecificationSimulationbased2025,
  title = {Addressing Misspecification in Simulation-Based Inference through Data-Driven Calibration},
  booktitle = {Proceedings of the 42nd {{International Conference}} on {{Machine Learning}}},
  author = {Wehenkel, Antoine and Gamella, Juan L. and Sener, Ozan and Behrmann, Jens and Sapiro, Guillermo and Cuturi, Marco and Jacobsen, J{\"o}rn-Henrik},
  year = 2025,
  pages = {65949--65980}
}

@article{wehenkelRobustHybridLearning2023,
  title = {Robust Hybrid Learning with Expert Augmentation},
  author = {Wehenkel, Antoine and Behrmann, Jens and Hsu, Hsiang and Sapiro, Guillermo and Louppe, Gilles and Jacobsen, J{\"o}rn-Henrik},
  year = 2023,
  journal = {Transactions of Machine Learning Research}
}

@article{wielandStructuralPracticalIdentifiability2021,
  title = {On Structural and Practical Identifiability},
  author = {Wieland, Franz-Georg and Hauber, Adrian L. and Rosenblatt, Marcus and T{\"o}nsing, Christian and Timmer, Jens},
  year = 2021,
  journal = {Current Opinion in Systems Biology},
  volume = {25},
  pages = {60--69}
}

@inproceedings{xuGeneralizingWeatherForecast2024,
  title = {Generalizing Weather Forecast to Fine-Grained Temporal Scales via Physics-{{AI}} Hybrid Modeling},
  booktitle = {Advances in {{Neural Information Processing Systems}}},
  author = {Xu, Wanghan and Ling, Fenghua and Han, Tao and Chen, Hao and Ouyang, Wanli and Bai, Lei},
  year = 2024,
  volume = {37},
  pages = {23325--23351}
}

@inproceedings{yeIdentifiabilityHybridDeep2024,
  title = {On the Identifiability of Hybrid Deep Generative Models: {{Meta-learning}} as a Solution},
  booktitle = {Advances in {{Neural Information Processing Systems}} 37},
  author = {Ye, Yubo and Tolou, Maryam and Vadhavkar, Sumeet and Jiang, Xiajun and Liu, Huafeng and Wang, Linwei},
  year = 2024,
  pages = {7714--7735}
}

@inproceedings{yinAugmentingPhysicalModels2021,
  title = {Augmenting Physical Models with Deep Networks for Complex Dynamics Forecasting},
  booktitle = {Proceedings of the 9th {{International Conference}} on {{Learning Representations}}},
  author = {Yin, Yuan and Le Guen, Vincent and Dona, J{\'e}r{\'e}mie and {de B{\'e}zenac}, Emmanuel and Ayed, Ibrahim and Thome, Nicolas and Gallinari, Patrick},
  year = 2021
}

@inproceedings{zhuangSurrogateGapMinimization2022,
  title = {Surrogate Gap Minimization Improves Sharpness-Aware Training},
  booktitle = {Proceedings of the 10th {{International Conference}} on {{Learning Representations}}},
  author = {Zhuang, Juntang and Gong, Boqing and Yuan, Liangzhe and Cui, Yin and Adam, Hartwig and Dvornek, Nicha and Tatikonda, Sekhar and Duncan, James and Liu, Ting},
  year = 2022
}

@inproceedings{zouHybrid$^2$NeuralODE2024,
  title = {Hybrid{\textsuperscript{2}} Neural {{ODE}} Causal Modeling and an Application to Glycemic Response},
  booktitle = {Proceedings of the 41st {{International Conference}} on {{Machine Learning}}},
  author = {Zou, Bob Junyi and Levine, Matthew E. and Zaharieva, Dessi P. and Johari, Ramesh and Fox, Emily},
  year = 2024,
  pages = {62934--62963}
}
\bibliographystyle{unsrtnat}

%%%%%%%%%%%%%%%%%%%%%%%%%%%%%%%%%%%%%%%%%%%%%%%%%%%%%%%%%%%%

\appendix

\numberwithin{theorem}{section}
\numberwithin{assumption}{section}
\numberwithin{proposition}{section}
\numberwithin{corollary}{section}
\numberwithin{definition}{section}

\section{Sharpness-Aware Minimization}
\label{appendix:sam}

Sharpness-aware minimization (SAM) \citep{foretSharpnessawareMinimizationEfficiently2021} is a method that aims to learn deep neural networks with the flatness (or sharpness) of the loss taken into account.
SAM and its variants \citep[e.g.,][]{kwonASAMAdaptiveSharpnessaware2021,kimFisherSAMInformation2022,zhuangSurrogateGapMinimization2022,mollenhoffSAMOptimalRelaxation2022} have been shown to improve the generalization capability of deep neural nets in typical tasks such as image classification.

The idea of SAM is to minimize the worst-case loss within a certain neighborhood around the current parameters.
Let $L_S(w)$ be a loss function on a training dataset $S$ for parameters $w$ of a model.
SAM tries to solve the following minimax problem:
\begin{equation*}
    \min_w \ \max_{\Vert \epsilon \Vert \leq \rho} \ L_S(w + \epsilon),
\end{equation*}
where $\rho > 0$ is a hyperparameter that defines the size of the neighborhood.
The inner maximization can be approximated by a single step of gradient ascent, yielding the perturbation:
\begin{equation}\label{eq:eps_sam}
    \epsilon^*_\text{SAM} = \rho \frac{\nabla L_S(w)}{\Vert \nabla L_S(w) \Vert}.
\end{equation}
The update rule of SAM is then given as
\begin{equation*}
    w \leftarrow w - \eta \nabla L_S(w + \epsilon^*_\text{SAM}),
\end{equation*}
where $\eta > 0$ is the learning rate.
The update step is performed by 1) computing the perturbation at the current parameter, 2) evaluating the loss gradient at the perturbed point, and 3) updating the parameter with the computed gradient.
In practice the higher-order derivative through $\epsilon^*$ is usually ignored \citep{foretSharpnessawareMinimizationEfficiently2021}.

To compute and minimize the sharpness in a meaningful manner, the geometry of the parameter manifold should be taken into account.
It has been addressed by multiple researchers based on different ideas.
For example, \citet{kwonASAMAdaptiveSharpnessaware2021} proposed adaptive SAM to scale the perturbation by the magnitude of each parameter, i.e.,
\begin{equation}\label{eq:eps_asam}
    \epsilon^*_\text{ASAM} = \rho \frac{w^2 \nabla L_S(w)}{\Vert w \nabla L_S(w) \Vert} \quad\text{(elementwise)}.
\end{equation}
\citet{kimFisherSAMInformation2022} proposed to use the loss curvature information for scaling.
Let $F(w)$ be the expected Hessian matrix of the loss (i.e., the Fisher information matrix) at $w$.
Then, Fisher SAM defines the perturbation as
\begin{equation}\label{eq:eps_fsam}
    \epsilon^*_\text{FSAM} = \rho \frac{F(w)^{-1} \nabla L_S(w)}{\sqrt{\nabla L_S(w)^\top F(w)^{-1} \nabla L_S(w)}},
\end{equation}
meaning that the perturbation is scaled by the local curvature of the loss landscape.

%%%%%%%%%%%%%%%%%%%%%%%%%%%%%%%%%%%%%%%%%%%%%%%%%%%%%%%%%%%%%%%%%%%%%

\section{Loss Bound}
\label{appendix:theory}

The purpose of the analysis here is to show that the empirical SAM objective,
\begin{equation*}
    L_S^\text{sam}(\theta, \phi) = \max_{\|\epsilon\| \leq \rho} L_S(\theta, \phi + \epsilon),
\end{equation*}
where $L_S(\theta, \phi) = \sum_{(x,y) \in S} \ell(h_{\theta,\phi}(x), y)$, and $S$ is an i.i.d. training dataset of size $n$, gives an upper bound on the expected population loss,
\begin{equation*}
    \mathbb{E}_{\phi \sim q} L(\theta, \phi).
\end{equation*}
For the sake of discussion, without losing much practicality, we restrict the posterior $q$ to be a Gaussian distribution
\begin{equation*}
    q = q_\varphi(\phi) \coloneqq \mathcal{N}(\phi \mid \varphi, \tau^2 I),
\end{equation*}
with some mean $\varphi \in \Phi$ and variance $\tau^2 \geq 0$.

In upper bounding $\mathbb{E}_{\phi \sim q_\varphi} L(\theta, \phi)$, we need to take care of the two arguments, $\theta$ and $\varphi$.
We follow the PAC-Bayes argument given in the SAM paper by \citet{foretSharpnessawareMinimizationEfficiently2021} for bounding the $\varphi$-part and then expand the discussion for all $\theta$ using the uniform-convergence argument based on covering numbers.

\begin{assumption}\label{assumption:1}
    The value of $\ell$ is bounded in $[0,1]$.
\end{assumption}
\begin{assumption}\label{assumption:2}
    For every $\phi \in \Phi$ and $(x,y)$, the map $\theta \mapsto \ell(h_{\theta,\phi}(x), y)$ is $\lambda$-Lipschitz on $\Theta \subset \mathbb{R}^{d_\theta}$ in the $\ell_2$ norm.
\end{assumption}
\begin{assumption}\label{assumption:3}
    $\Theta$ is a bounded set.
\end{assumption}
\begin{theorem}
    Suppose Assumptions~\ref{assumption:1}, \ref{assumption:2}, and \ref{assumption:3} hold.
    Let $\tau = \frac{\rho}{\sqrt{d_\varphi} + s}$ with some $s \geq 0$.
    Let $N(r, \Theta)$ with some $r > 0$ denote the $r$-covering number of $\Theta$ in the $\ell_2$ norm.
    Then, with probability at least $1 - \delta$ over the choice of $S$,
    \begin{multline}
        \mathbb{E}_{\phi \sim q_\varphi} L(\theta, \phi)
        \leq
        L_S^\text{sam}(\theta, \varphi)
        + 2 \lambda r
        + \exp \left( \frac{-(\frac{\rho}{\tau} - \sqrt{d_\varphi})^2}2 \right)
        \\
        + \sqrt{\frac{\frac14 d_\varphi \log \left( 1 + \frac{\|\varphi\|_2^2}{d_\varphi \tau^2} \right) + \frac14 + \log\frac{n N(r, \Theta)}{\delta} + 2\log(6n+3 d_\varphi)}{n-1}},
    \end{multline}
    for all $\theta \in \Theta$ and $\varphi \in \Phi$.
\end{theorem}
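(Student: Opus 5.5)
The plan has two parts. First, for each fixed $\theta$, reproduce the PAC-Bayes argument of Foret et al.\ for SAM. Second, make the bound uniform over $\Theta$ with an $r$-net and the Lipschitz Assumption~\ref{assumption:2}.

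\emph{Step 1: fixed $\theta$.} Fix $\theta_0$ and apply a McAllester-type PAC-Bayes bound to the loss $\phi \mapsto \ell(h_{\theta_0,\phi}(x),y) \in [0,1]$ (Assumption~\ref{assumption:1}). The posterior is $q_\varphi = \mathcal{N}(\varphi,\tau^2 I)$. Following Foret et al., use a countable grid of isotropic Gaussian priors $\mathcal{N}(0,\sigma_j^2 I)$ with $\sigma_j^2 = c\exp((1-j)/d_\varphi)$, and split the failure probability $\delta_j \propto \delta/(j(j+1))$ across the grid. For the $\sigma_j$ closest to $\sqrt{\tau^2 + \|\varphi\|^2/d_\varphi}$, the KL term is at most
\[
\tfrac12 d_\varphi \log\bigl(1+\|\varphi\|^2/(d_\varphi\tau^2)\bigr) + \tfrac12 .
\]
The union bound over the grid produces the $2\log(6n+3d_\varphi)$ term. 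This needs the usual case split on whether $\|\varphi\|^2$ is small enough that some grid point is relevant; in the other case the bound is trivial. Using the square-root form $\sqrt{(\mathrm{KL}+\log(n/\delta'))/(2(n-1))}$ absorbs the factor $\tfrac12$ and gives the coefficients $\tfrac14$ in the statement. I will copy these constants from Foret et al.\ rather than rederive them.

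\emph{Step 2: union over a net.} Let $\{\theta_k\}_{k=1}^{N(r,\Theta)}$ be an $r$-net of $\Theta$, which is finite by Assumption~\ref{assumption:3}. Apply Step 1 at each $\theta_k$ with confidence $\delta/N(r,\Theta)$. This replaces $\log(n/\delta)$ by $\log(nN(r,\Theta)/\delta)$, and the result holds simultaneously for all $k$ and all $\varphi$. Now take any $\theta$ and pick $\theta_k$ with $\|\theta-\theta_k\|\le r$. Assumption~\ref{assumption:2} holds pointwise in $\phi$ and $(x,y)$, so it passes through expectations over data and over $q_\varphi$:
\[
\mathbb{E}_{q_\varphi}L(\theta,\phi) \le \mathbb{E}_{q_\varphi}L(\theta_k,\phi)+\lambda r,
\qquad
\mathbb{E}_{q_\varphi}L_S(\theta_k,\phi) \le \mathbb{E}_{q_\varphi}L_S(\theta,\phi)+\lambda r .
\]
Together these give the $2\lambda r$ term.

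\emph{Step 3: expectation to max.} Bound the empirical Gaussian average by the SAM objective. Splitting on the event $\|\epsilon\| \le \rho$ and using $\ell \le 1$,
\[
\mathbb{E}_{\epsilon\sim\mathcal{N}(0,\tau^2 I)} L_S(\theta,\varphi+\epsilon) \le \max_{\|\epsilon\|\le\rho} L_S(\theta,\varphi+\epsilon) + \Pr(\|\epsilon\|>\rho).
\]
Since $\|\epsilon\|/\tau$ is chi-distributed with $d_\varphi$ degrees of freedom, the Lipschitz Gaussian concentration bound $\Pr(\|\epsilon\|/\tau > \sqrt{d_\varphi}+t) \le e^{-t^2/2}$ applies. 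Taking $t = \rho/\tau - \sqrt{d_\varphi} = s \ge 0$ yields the exponential term. This step is applied for the actual $\theta$ after the Lipschitz transfer.

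\emph{Main obstacle.} The delicate part is Step 1: the prior-grid union bound has to be uniform in $\varphi$ while yielding exactly the stated constants. I expect to cite Foret et al.'s Theorem directly, stated with $\delta$ replaced by $\delta/N(r,\Theta)$. One subtlety needs checking: the SAM neighbourhood is defined around $\varphi$, while the PAC-Bayes posterior is centred at $\varphi$ with the same $\tau$ for all $\theta$, so the per-$\theta_k$ bounds combine consistently.
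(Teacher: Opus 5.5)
Your proposal is correct and follows essentially the same route as the paper. You cite Foret et al.'s PAC-Bayes bound, which is uniform in $\varphi$, at each point of a minimal $r$-cover with $\delta' = \delta/N(r,\Theta)$, then transfer to arbitrary $\theta$ via the Lipschitz assumption on the Gaussian-averaged losses (giving $2\lambda r$), and only then bound the averaged empirical loss by $L_S^{\text{sam}}$ plus the Gaussian tail $e^{-s^2/2}$ with $s = \rho/\tau - \sqrt{d_\varphi}$; the paper does exactly this, obtaining the tail from Borell--TIS concentration of $z\mapsto\|z\|$ together with $\mathbb{E}\|z\|\le\sqrt{d_\varphi}$.
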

\begin{proof}
    Let $\Theta_r \subset \Theta$ be a minimal $r$-cover of $\Theta$ in the $\ell_2$ norm; by definition, $|\Theta_r| = N(r, \Theta)$.

    % Step 1: PAC-Bayes bound on a cover
    From the definition of $q_\varphi$, the target quantity can be written in the reparametrized form:
    \begin{equation}\label{eq:reparam}
        \mathbb{E}_{\phi \sim q_\varphi} L(\theta, \phi)
        = \mathbb{E}_{\epsilon \sim \mu_\tau} L(\theta, \varphi + \epsilon),
    \end{equation}
    where $\mu_\tau \coloneqq \mathcal{N}(0, \tau^2 I)$.
    For each fixed $\theta_k \in \Theta_r$, the discussion in \citet{foretSharpnessawareMinimizationEfficiently2021} gives the following PAC-Bayes bound: with probability at least $1 - \delta'$ over the choice of $S$, for all $\varphi \in \Phi$,
    \begin{multline}\label{eq:foret_at_cover}
        \mathbb{E}_{\epsilon \sim \mu_\tau} L(\theta_k, \varphi + \epsilon)
        \leq \mathbb{E}_{\epsilon \sim \mu_\tau} L_S(\theta_k, \varphi + \epsilon)
        \\
        + \sqrt{\frac{\frac14 d_\varphi \log \left( 1 + \frac{\|\varphi\|_2^2}{d_\varphi \tau^2} \right) + \frac14 + \log\frac{n}{\delta'} + 2\log(6n + 3 d_\varphi)}{n-1}}.
    \end{multline}
    Note that this bound holds uniformly in $\varphi$ on a single $1-\delta'$ event over $S$, since the PAC-Bayes inequality is uniform in the posterior mean.
    Applying the union bound over the $N(r, \Theta)$ points of $\Theta_r$ and setting $\delta' = \delta / N(r, \Theta)$, we obtain that with probability at least $1 - \delta$ over $S$, \eqref{eq:foret_at_cover} holds simultaneously for all $\theta_k \in \Theta_r$ and $\varphi \in \Phi$, with $\log\frac{n}{\delta'}$ replaced by $\log\frac{n N(r, \Theta)}{\delta}$.

    % Step 2: Lipschitz lifting from the cover to all of $\Theta$
    For any $\theta \in \Theta$, by definition of the covering, there exists $\theta_k \in \Theta_r$ with $\|\theta - \theta_k\|_2 \leq r$.
    By assumption, $\theta \mapsto \ell(h_{\theta, \phi}(x), y)$ is $\lambda$-Lipschitz uniformly in $\phi$, $x$, and $y$.
    Since $L_S$ is the empirical average and $L$ is the population expectation of $\ell$, both inherit $\lambda$-Lipschitzness in $\theta$ for every $\phi'$:
    \begin{equation*}
        |L(\theta, \phi') - L(\theta_k, \phi')| \leq \lambda r,
        \quad
        |L_S(\theta, \phi') - L_S(\theta_k, \phi')| \leq \lambda r.
    \end{equation*}
    Taking expectation over $\phi' \sim q_\varphi$ (equivalently, $\phi' = \varphi + \epsilon$ with $\epsilon \sim \mu_\tau$) preserves the inequalities, so
    \begin{align*}
        \mathbb{E}_{\epsilon \sim \mu_\tau} L(\theta, \varphi + \epsilon)
        &\leq \mathbb{E}_{\epsilon \sim \mu_\tau} L(\theta_k, \varphi + \epsilon) + \lambda r,
        \\
        \mathbb{E}_{\epsilon \sim \mu_\tau} L_S(\theta_k, \varphi + \epsilon)
        &\leq \mathbb{E}_{\epsilon \sim \mu_\tau} L_S(\theta, \varphi + \epsilon) + \lambda r.
    \end{align*}
    Chaining these with \eqref{eq:foret_at_cover} after the union bound, with probability at least $1 - \delta$, for all $\theta \in \Theta$ and $\varphi \in \Phi$,
    \begin{multline}\label{eq:lifted}
        \mathbb{E}_{\epsilon \sim \mu_\tau} L(\theta, \varphi + \epsilon)
        \leq \mathbb{E}_{\epsilon \sim \mu_\tau} L_S(\theta, \varphi + \epsilon)
        + 2 \lambda r
        \\
        + \sqrt{\frac{\frac14 d_\varphi \log \left( 1 + \frac{\|\varphi\|_2^2}{d_\varphi \tau^2} \right) + \frac14 + \log\frac{n N(r, \Theta)}{\delta} + 2\log(6n + 3 d_\varphi)}{n-1}}.
    \end{multline}

    % Step 3: Replace Gaussian-perturbed L_S with L_S^{sam}
    Splitting the expectation to the inside and outside of a ball of radius $\rho$, the first term of the rhs of \eqref{eq:lifted} is
    \begin{equation*}
        \mathbb{E}_{\epsilon \sim \mu_\tau} L_S(\theta, \varphi + \epsilon)
        = \int_{\|\epsilon\| \leq \rho} L_S(\theta, \varphi+\epsilon) d\mu_\tau(\epsilon)
        + \int_{\|\epsilon\| > \rho} L_S(\theta, \varphi+\epsilon) d\mu_\tau(\epsilon).
    \end{equation*}
    Since the loss is bounded in $[0,1]$,
    \begin{equation}\label{eq:expectation_split}
        \mathbb{E}_{\epsilon \sim \mu_\tau} L_S(\theta, \varphi + \epsilon)
        \leq L_S^\text{sam}(\theta, \varphi) \mu_\tau(B_\rho) + \mu_\tau(B_\rho^\mathrm{c})
        \leq L_S^\text{sam}(\theta, \varphi) + \mu_\tau(B_\rho^\mathrm{c}),
    \end{equation}
    where $B_\rho$ is the ball of radius $\rho$ around $0$, and $B_\rho^\mathrm{c}$ denotes the complement of $B_\rho$.

    % Step 4: Gaussian-norm tail
    For $\epsilon \sim \mu_\tau$, write $\epsilon = \tau z$ with $z \sim \mathcal{N}(0, I)$.
    The map $z \mapsto \|z\|_2$ is $1$-Lipschitz, and Jensen's inequality tells $\mathbb{E}\|z\| \leq \sqrt{\mathbb{E} \|z\|^2} = \sqrt{d_\varphi}$.
    The Borell--TIS inequality gives, for every $t \geq 0$,
    \begin{equation*}
        \operatorname{Pr} \big( \|z\| \geq \mathbb{E}\|z\| + t \big) \leq e^{-\frac{t^2}2},
    \end{equation*}
    and consequently,
    \begin{equation*}
        \operatorname{Pr} \big( \|\epsilon\| \geq \tau(\sqrt{d_\varphi} + t) \big) \leq e^{-\frac{t^2}2}.
    \end{equation*}
    Setting $t = \frac{\rho}{\tau} - \sqrt{d_\varphi} = s \geq 0$,
    \begin{equation}\label{eq:gaussian_bound}
        \mu_\tau(B_\rho^\mathrm{c}) = \operatorname{Pr} \big( \|\epsilon\| > \rho \big)
        \leq e^{-\frac{(\frac{\rho}{\tau} - \sqrt{d_\varphi})^2}2}.
    \end{equation}

    % Step 5: Assembly
    Substituting \eqref{eq:expectation_split} and \eqref{eq:gaussian_bound} into \eqref{eq:lifted}, and using the reparametrization \eqref{eq:reparam}, with probability at least $1 - \delta$, for all $\theta \in \Theta$ and $\varphi \in \Phi$,
    \begin{multline*}
        \mathbb{E}_{\phi \sim q_\varphi} L(\theta, \phi)
        \leq L_S^\text{sam}(\theta, \varphi)
        + 2 \lambda r
        + \exp \left( \frac{-(\frac{\rho}{\tau} - \sqrt{d_\varphi})^2}2 \right)
        \\
        + \sqrt{\frac{\frac14 d_\varphi \log \left( 1 + \frac{\|\varphi\|_2^2}{d_\varphi \tau^2} \right) + \frac14 + \log\frac{n N(r, \Theta)}{\delta} + 2\log(6n + 3 d_\varphi)}{n-1}}.
    \end{multline*}
\end{proof}

%%%%%%%%%%%%%%%%%%%%%%%%%%%%%%%%%%%%%%%%%%%%%%%%%%%%%%%%%%%%%%%%%%%%%

\section{Identifiability Condition}
\label{appendix:identifiability}

We try to formulate a condition for the identification of $\theta$ more formally than in the main text and show that such a condition does not depend on specific model architectures.
The target of the analysis here is the free energy in \cref{eq:soft_profile_loss}:
\begin{equation*}
    \bar{J}(\theta) \coloneqq \min_q \Big( \mathbb{E}_{\phi \sim q} L(\theta, \phi) + \operatorname{KL}(q(\phi) \ \| \ p(\phi)) \Big).
\end{equation*}
We aim to state the conditions under which this $\bar{J}$ is helpful for identifying the scientific parameter, $\theta$.

Let $\theta_0 \in \Theta$ be the ``correct'' value of the scientific parameter.
What a ``correct'' scientific parameter means depends on context, and we do not limit the scope of discussion here in this regard.
Let $\phi_0 \in \Phi$ be the corresponding value of the machine learning parameter, that is, $\phi_0 \in \arg\min_\phi L(\theta_0, \phi)$.
We assume that $\theta_0$ is identifiable if we fix the machine learning parameter at $\phi_0$.
More formally:
\begin{assumption}\label{assumption:identifiability}
    Let $\delta_{\phi_0}(\theta) \coloneqq L(\theta, \phi_0) - L(\theta_0, \phi_0)$.
    Under the fixed $\phi_0$, $\theta_0$ is identifiable in the sense that $\delta_{\phi_0}(\theta) > 0$ for all $\theta \neq \theta_0$.
\end{assumption}

The core of our motivation in the hybrid modeling study is the observation that, regardless of the original identifiability assumption under the fixed, oracle $\phi_0$, the profile loss:
\begin{equation*}
    J(\theta) = \min_\phi L(\theta, \phi)
\end{equation*}
loses the identifiability, at least practically, because when the machine learning part of a hybrid model is highly flexible, the $\min_\phi$ operation yields $J(\theta) \approx J(\theta_0)$ for many (or even all) $\theta \neq \theta_0$.

The previous studies on hybrid model learning \citep{yinAugmentingPhysicalModels2021,takeishiPhysicsintegratedVariationalAutoencoders2021,takeishiDeepGreyboxModeling2023} used and analyzed some regularizers to break the ties in the profile loss.
They are certainly helpful when the model architecture follows the design idea of the regularizers.
However, the need to design regularizers specifically for each architecture hinders the broad application of hybrid modeling with various architecture choices.

Now let us see under what conditions $\bar{J}$ may help the identification.
It is well known that the minimizer of the $\min_q$ operation in the definition of $\bar{J}$ is
\begin{equation*}
    q^*_\theta(\phi) \coloneqq \frac{p(\phi) \exp(-L(\theta, \phi))}{Z(\theta)},
\end{equation*}
where the partition function is $Z(\theta) \coloneqq \int p(\phi) \exp(-L(\theta, \phi)) d \phi$.
A natural consequence is
\begin{equation*}
    \bar{J}(\theta) = -\log Z(\theta),
\end{equation*}
and thus
\begin{equation}\label{eq:V_representation}
    \bar{J}(\theta) = J(\theta) - \log V(\theta)
    \quad \text{where} \quad
    V(\theta) = \int \exp \Big( -( L(\theta,\phi) - J(\theta) ) \Big) p(\phi) d\phi.
\end{equation}
This view allows us to think that $\bar{J}(\theta)$ has the $-\log V(\theta)$ term as a regularizer.
From the definition of $V(\theta)$, it measures the mass of prior $p(\phi)$ corresponding to the region where $L(\theta,\phi)-J(\theta) \geq 0$ is small, that is, for a fixed $\theta$, the region of good $\phi$ in terms of the small excess loss.

Let
\begin{equation*}
    G_{\phi_0}(\theta) \coloneqq L(\theta, \phi_0) - J(\theta) \geq 0
\end{equation*}
be the loss improvement, under fixed $\theta$, given when $\phi$ is moved from $\phi_0$ to the profile minimizer, $\arg \min_\phi L(\theta,\phi)$.
This value reflects the extent of the compensation by the machine learning part for a wrong $\theta$.
We can formulate an identification condition as follows.
\begin{proposition}
    Let Assumption~\ref{assumption:identifiability} hold.
    If
    \begin{equation}\label{eq:identifiability_condition}
        - \log V(\theta) > - \log V(\theta_0) + G_{\phi_0}(\theta) - \delta_{\phi_0}(\theta) \quad\text{for}\quad \theta \neq \theta_0,
    \end{equation}
    then $\bar{J}$ can identify $\theta_0$ in the sense that $\bar{J}(\theta) - \bar{J}(\theta_0) > 0$ for $\theta \neq \theta_0$.
\end{proposition}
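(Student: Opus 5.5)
The plan is to use the decomposition \eqref{eq:V_representation} and reduce the claim to a comparison of profile losses. Since $\bar{J}(\theta) = J(\theta) - \log V(\theta)$ holds for every $\theta$, we have
\begin{equation*}
    \bar{J}(\theta) - \bar{J}(\theta_0) = \big(J(\theta) - J(\theta_0)\big) + \big(-\log V(\theta) + \log V(\theta_0)\big).
\end{equation*}
It therefore suffices to show that the condition \eqref{eq:identifiability_condition} makes this sum strictly positive for every $\theta \neq \theta_0$.

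Next I rewrite $J(\theta) - J(\theta_0)$ in terms of $G_{\phi_0}$ and $\delta_{\phi_0}$. By definition, $J(\theta) = L(\theta,\phi_0) - G_{\phi_0}(\theta)$. Because $\phi_0 \in \arg\min_\phi L(\theta_0,\phi)$, we also have $J(\theta_0) = L(\theta_0,\phi_0)$. Subtracting these gives the exact identity
\begin{equation*}
    J(\theta) - J(\theta_0) = \delta_{\phi_0}(\theta) - G_{\phi_0}(\theta).
\end{equation*}
Rearranging \eqref{eq:identifiability_condition} gives $-\log V(\theta) + \log V(\theta_0) > G_{\phi_0}(\theta) - \delta_{\phi_0}(\theta)$. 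Adding this to the identity yields $\bar{J}(\theta) - \bar{J}(\theta_0) > 0$ for all $\theta \neq \theta_0$, which is the claim.

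The derivation is almost entirely algebraic. The only point needing care is that the representation \eqref{eq:V_representation} is well defined. This requires $Z(\theta)$ to be finite and positive, and $J(\theta)$ to be finite so that $V(\theta) = Z(\theta)\,e^{J(\theta)}$ makes sense. Boundedness of $L$ from below together with $p$ being a probability density gives this, and I would state it as an implicit standing assumption.

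Assumption~\ref{assumption:identifiability} is not strictly needed for the inequality itself. Its role is interpretive: it guarantees $\delta_{\phi_0}(\theta) > 0$, so the required gap in $-\log V$ is weaker than the compensation $G_{\phi_0}(\theta)$ alone. I would remark on this after the proof rather than use it inside it.
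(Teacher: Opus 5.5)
Your proposal is correct and matches the paper's proof essentially step for step: you use $\bar{J}(\theta) = J(\theta) - \log V(\theta)$, derive $J(\theta) - J(\theta_0) = \delta_{\phi_0}(\theta) - G_{\phi_0}(\theta)$ from $J(\theta_0) = L(\theta_0,\phi_0)$ (i.e.\ $G_{\phi_0}(\theta_0)=0$), and add the rearranged condition. Your two side remarks are accurate points the paper leaves implicit: well-definedness of $V$ needs $L$ bounded below so that $Z$ and $J$ are finite, and the identifiability assumption is not used in the inequality itself, only to show the condition is weaker than requiring a gap of $G_{\phi_0}(\theta)$.
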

\begin{proof}
    Since $J(\theta) = L(\theta, \phi_0) - G_{\phi_0}(\theta)$ by the definition of $G_{\phi_0}$, we have
    \begin{equation*}
        J(\theta) - J(\theta_0)
        = L(\theta, \phi_0) - L(\theta_0, \phi_0) - G_{\phi_0}(\theta) + G_{\phi_0}(\theta_0)
        = \delta_{\phi_0}(\theta) - G_{\phi_0}(\theta),
    \end{equation*}
    because $G_{\phi_0}(\theta_0) = 0$ as we assumed $\phi_0 \in \arg \min_\phi L(\theta_0, \phi)$.
    Consequently, from \eqref{eq:V_representation},
    \begin{equation*}
        \bar{J}(\theta) - \bar{J}(\theta_0) = \delta_{\phi_0}(\theta) - G_{\phi_0}(\theta) + \log \frac{V(\theta_0)}{V(\theta)}.
    \end{equation*}
    Therefore, when \eqref{eq:identifiability_condition} holds, we have $\bar{J}(\theta) - \bar{J}(\theta_0) > 0$ for $\theta \neq \theta_0$.
\end{proof}

An important note is that we here never say that the use of $\bar{J}$ admits a free-hand identification of $\theta$.
On the contrary, the identification is ensured only when the condition \eqref{eq:identifiability_condition} is met, and we have not discussed the exact situations when it is met.
Rather, the point of the discussion above is that the identification condition \eqref{eq:identifiability_condition} does not presume specific model architectures.
Moreover, the global identification discussed above is probably overly strong, and arguments for weaker local identification would be useful for analyzing practical deep hybrid modeling.

%%%%%%%%%%%%%%%%%%%%%%%%%%%%%%%%%%%%%%%%%%%%%%%%%%%%%%%%%%%%%%%%%%%%%

% \section{Relation to Misspecified Model Estimation}
% \label{appendix:misspecified}

% \input{relation_to_misspcified}

%%%%%%%%%%%%%%%%%%%%%%%%%%%%%%%%%%%%%%%%%%%%%%%%%%%%%%%%%%%%%%%%%%%%%

\section{Task Description}
\label{appendix:tasks}

\subsection{Pendulum Time-Series}

\paragraph{Data}

We simulated the damped pendulum system: $$\ddot{v}(t) + \gamma \dot{v}(t) + (2 \pi \omega)^2 \sin v(t)=0,$$ where $v(t): \mathbb{R} \to \mathbb{R}$ is the pendulum's angle, and $\ddot{v}$ and $\dot{v}$ denote the second and first derivatives with regard to time $t \in \mathbb{R}$, respectively.
For the simulation we used the explicit Runge-Kutta method of order 5(4) for the numerical integration with a time step of $0.02$.
We then subsampled the sequence by a factor of 10, so the effective time step of the data is $\Delta t = 0.2$.
We set $\gamma=0.5$ and $\omega=2/3$ for data generation.

We formulate the task as the prediction from an initial state $x$ to a subsequent sequence of states $y$:
\begin{equation*}\begin{aligned}
    x &= \mathbf{v}(0) \in \mathbb{R}^{2} \quad \text{and} \\
    y &= [ \mathbf{v}(\Delta t), \dots, \mathbf{v}(m_y \Delta t) ] \in \mathbb{R}^{m_y \times 2},
\end{aligned}\end{equation*}
where $\mathbf{v}=[v, \dot{v}]$, and $m_y=10$ is the output sequence lengths.
The initial angle $v(0)$ and angular velocity $\dot{v}(0)$ were randomly sampled from the uniform distribution $U(-\pi/2, \pi/2)$.
We generated 25, 25, and 25 input-output pairs for the training, validation, and test data, respectively.
We added zero-mean Gaussian noise with standard deviation $0.01$ to the data.
Similar data were also used in \citet{yinAugmentingPhysicalModels2021}.

\paragraph{Model}

The hybrid model we used is a hybrid neural ODE as practiced in \citet{yinAugmentingPhysicalModels2021}.
We suppose that we know a part of the data-generating equation as an incomplete scientific model: $$f_\theta(v) \coloneqq \ddot{v}(t) + (2 \pi \tilde\omega)^2 \sin v(t),$$ where $\theta = \{ \tilde\omega \}$ is the unknown parameter to be identified.
Then the hybrid neural ODE is formulated as
\begin{equation*}
    y = \operatorname{ODESolve} ( f_\theta(v) + g_\phi(v) = 0; v(0) = x )
\end{equation*} where $g_\phi$ is a feed-forward neural net with two hidden layers of size 128 and the ReLU activation function.

%%%%%%%%%%%%%%%%%%%%%%%%%%%%%%%%%%%%%%%%%%%%%%%%%%%%%%%%%%%%%%%%%%%%%

\subsection{Reaction-Diffusion}

\paragraph{Data}

We simulated the FitzHugh-Nagumo type reaction-diffusion system: \begin{equation*}\begin{aligned}
    u_t(t,\xi) &= a \nabla^2 u + u - u^3 - \kappa - v, \\
    v_t(t,\xi) &= b \nabla^2 v + u - v,
\end{aligned}\end{equation*}
where $u(t,\xi), v(t,\xi): \mathbb{R} \times [-1,1]^2 \to \mathbb{R}$ are the concentrations of two chemical species at time $t \in \mathbb{R}$ and spatial location $\xi \in [-1,1]^2$.
$u_t$ and $v_t$ denote the temporal derivatives of $u$ and $v$, respectively.
$\nabla^2$ is the Laplacian operator with regard to the space.
For the simulation we used the explicit Runge-Kutta method of order 5(4) for the numerical integration with a time step of $0.001$.
We then subsampled the sequence by a factor of 100, so the effective time step of the data is $\Delta t = 0.1$.
We set $a=0.001$, $b=0.005$, and $\kappa=0.005$ for data generation.

The task is to predict from an initial state $x \in \mathbb{R}^{2 \times d \times d}$ to the subsequent sequence of states $y \in \mathbb{R}^{2 \times d \times d \times m_y}$, where $d=32$ is the number of spatial discretization points along each axis, and $m_y=5$ is the output sequence length.
We used the five-point stencil method for approximating the Laplacian operator.
The initial concentrations $u(0,\xi)$ and $v(0,\xi)$ were generated by sampling from the uniform distribution $U(0,1)$.
We generated 100, 100, and 100 input-output pairs for the training, validation, and test data, respectively.
We added zero-mean Gaussian noise with standard deviation $0.01$ to the data.
Similar data were also used in \citet{yinAugmentingPhysicalModels2021}.

%%%%% TODO: explicit x,y definition

\paragraph{Model}

We set an incomplete scientific model as the diffusion terms: $$f_\theta(u,v) \coloneqq \begin{bmatrix} u_t - \tilde a \nabla^2 u \\ v_t - \tilde b \nabla^2 v\end{bmatrix},$$ where $\theta = \{ \tilde a, \tilde b \}$ is the unknown parameters to be identified.
It is combined with a machine learning part $g_\phi$ implemented as a convolutional neural net (kernel size 3) with two hidden layers of size (i.e., number of channels) 16, the ReLU activation function, and batch normalization.
The combination is formulated as
\begin{equation*}
    y = \operatorname{ODESolve} ( f_\theta(u,v) + g_\phi(u,v) = 0; v(0) = x ).
\end{equation*}

%%%%%%%%%%%%%%%%%%%%%%%%%%%%%%%%%%%%%%%%%%%%%%%%%%%%%%%%%%%%%%%%%%%%%

\subsection{Duffing Oscillator}

\paragraph{Data}

We simulated a nonlinear oscillator system: $$\ddot{v}(t) + \alpha v(t) - \beta v(t)^3 = 0.$$
For the simulation we used the explicit Runge-Kutta method of order 5(4) for the numerical integration with a time step of $0.005$.
We then subsampled the sequence by a factor of 20, so the effective time step of the data is $\Delta t = 0.1$.
We set $\alpha=1$ and $\beta=1$ for data generation.

We formulate the task as the prediction from an initial condition $x$ to a subsequent states $y$ of length $m_y$:
\begin{equation*}\begin{aligned}
    x &= \mathbf{v}(0) \in \mathbb{R}^{2} \quad \text{and} \\
    y &= [ \mathbf{v}(\Delta t), \dots, \mathbf{v}(m_y \Delta t) ] \in \mathbb{R}^{m_y \times 2},
\end{aligned}\end{equation*}
where $\mathbf{v}=[v, \dot{v}]$, and $m_y=10$ is the output sequence lengths.
We generated 100, 100, and 100 input-output pairs for the training, validation, and test data, respectively.
We added zero-mean Gaussian noise with standard deviation $0.01$ to the data.

\paragraph{Model}

We set an incomplete scientific model: $$f_\theta(v) \coloneqq \ddot{v}(t) + \tilde\alpha v(t),$$ where $\theta = \{ \tilde\alpha \}$ is the unknown parameter to be identified.
The hybrid model is a hybrid neural ODE:
\begin{equation*}
    y = \operatorname{ODESolve} ( f_\theta(v) + g_\phi(v) = 0; v(0) = x )
\end{equation*} where $g_\phi$ is a feed-forward neural net with two hidden layers of size 128 and the ReLU activation function.

%%%%%%%%%%%%%%%%%%%%%%%%%%%%%%%%%%%%%%%%%%%%%%%%%%%%%%%%%%%%%%%%%%%%%

\subsection{Pendulum Images}

\paragraph{Data}

We simulated trajectories of a pendulum as in the \textsc{pendulum time-series} task and then rendered gray-scale images of size $d \times d$ with $d=48$ pixels based on the simulated pendulum angles.
The task is to predict future image sequences, $y \in [0,1]^{m_y \times d \times d}$, given a preceding sequence of images, $x \in [0,1]^{m_x \times d \times d}$, where $m_x=10$ and $m_y=10$ are the input and output sequence length, respectively.
We generated 200, 200, and 200 input-output pairs for the training, validation, and test data, respectively.

%%%%% TODO: example of pendulum images

\paragraph{Model}

The hybrid model is built by combining the hybrid neural ODE used in the \textsc{pendulum time-series} task and additional neural nets for encoding and decoding.
The model has two encoders.
The first encoder receives the input image sequence and gives the initial condition of the pendulum for the hybrid neural ODE.
The second encoder receives the input image sequence and gives a latent representation, which is then concatenated with the output of the hybrid neural ODE.
The decoder maps the output of the hybrid neural ODE (pendulum angles), concatenated with the latent representation, to images.
It comprises 1) a network with fully-connected layers with two hidden layers of size 1152 with ReLU and 2) a convolutional network (kernel size 3) with two hidden layers of size (i.e., number of channels) 576 and 288 with ReLU and batch normalization.

Note that the absolute initial condition of the pendulum is in principle unindentifiable because we do not give any correspondence between the images and the pendulum angles.
However it is not problematic here because we focus on the estimation of the pendulum's frequency parameter, $\theta = \{ \tilde\omega \}$, and the interest is not in the phase of the oscillation.
The encoder should learn the map from images to the initial condition only up to the translational invariance.

%%%%%%%%%%%%%%%%%%%%%%%%%%%%%%%%%%%%%%%%%%%%%%%%%%%%%%%%%%%%%%%%%%%%%

\subsection{Wind Tunnel}

\paragraph{Data}

We used a part of the dataset by \citet{gamellaCausalChambersRealworld2025}\footnote{\label{footnote:causalchamber}\url{https://github.com/juangamella/causal-chamber}}.
It is a dataset comprising measurements from a real-world wind tunnel.
We extracted a subset of the data (``loads\_hatch\_mix\_fast\_run'' of ``wt\_walks\_v1'' dataset) to formulate a task to predict a sequence of the pressure inside the tunnel $y \in \mathbb{R}^{m}$ from a sequence of the loads of the intake and exhaust fans and the position of the hatch, $x \in \mathbb{R}^{m \times 3}$, where $m$ is the length of the input and output sequences.
The hatch controls an additional opening at the middle of the tunnel.
We created the subsequences $\{(x,y)\}$ by a sliding window of length $m=200$ with a stride of 50 from the original time series.
Finally we have 600, 150, and 235 input-output pairs for the training, validation, and test data, respectively.

\paragraph{Model}

As an incomplete scientific model $f_\theta$, we use the models ``A1'' and ``C2'' presented in \citet{gamellaCausalChambersRealworld2025}, which give the relation between the pressure and the fan loads.
They are incomplete in terms of the effect of the hatch position and the transient dynamics of the pressure.
The hybrid model transforms the output of these models with a feed-forward neural net.
We set a scalar parameter of the model C2, namely the ratio $r \in [0,1]$ of the maximum airflow that the fans produce at full speed, as the unknown parameter to be estimated, i.e., $\theta = \{ r \}$.
As the truly data-generating value of the parameter is never known, we used a value suggested in the paper and the codes \citep{gamellaCausalChambersRealworld2025}, $r=0.7$, as a reference value to compute the estimation error of $\theta$.
The full hybrid model is $$y = g_\phi(z, x),$$ where $z = f_\theta(x) \in \mathbb{R}^{m}$ is the output of the scientific model, and $g_\phi$ is a feed-forward neural net with two hidden layers of size $2m$ and the ReLU activation function.

%%%%%%%%%%%%%%%%%%%%%%%%%%%%%%%%%%%%%%%%%%%%%%%%%%%%%%%%%%%%%%%%%%%%%

\subsection{Light Tunnel}

\paragraph{Data}

We used another part of the dataset by \citet{gamellaCausalChambersRealworld2025}\textsuperscript{\ref{footnote:causalchamber}}.
It contains real-world measurements from a light tunnel made of light sources, linear polarizers, and a light sensor.
We extracted a subset of the data (``uniform\_ap\_1.8\_iso\_500.0\_ss\_0.005'' of ``lt\_camera\_v1'' dataset) to formulate a task to predict the RGB image taken by the sensor, $y \in [0,1]^{3 \times 100 \times 100}$, from the light source configuration and the polarizer angles, $x = \begin{bmatrix} R & G & B & \alpha_1 & \alpha_2 \end{bmatrix} \in \mathbb{R}^{5}$.
$R,G,B$ are the intensities of the red, green, and blue light sources, respectively, and $\alpha_1, \alpha_2$ are the angles of the two polarizers.
Finally we have 6000, 1000, and 2500 input-output pairs for the training, validation, and test data, respectively.

\paragraph{Model}

As an incomplete scientific model, $f_\theta$, we use the model ``F3'' presented in \citet{gamellaCausalChambersRealworld2025}, which models the effect of the polarizers in a frequency-dependent way:
\begin{equation}\label{eq:model_f3}
    \begin{bmatrix} \tilde{R} \\ \tilde{G} \\ \tilde{B} \end{bmatrix}
    =
    \min \left( 1,
        e
        \operatorname{diag}(\bm{w})
        \bm{S}
        \operatorname{diag}\left( (\bm{t}^\mathrm{p} - \bm{t}^\mathrm{c}) \cos^2 (\alpha_1 - \alpha_2) + \bm{T}^c \right)
        \begin{bmatrix} R \\ G \\ B \end{bmatrix}.
    \right)
\end{equation}
We used the value of $e$ and $\bm{S}$ suggested in the paper and the codes \citep{gamellaCausalChambersRealworld2025}.
The remaining, $\bm{w} \in \mathbb{R}^3$, $\bm{t}^\mathrm{p} \in \mathbb{R}^3$, and $\bm{t}^\mathrm{c} \in \mathbb{R}^3$, are treated as unknown parameters to be estimated, i.e., $\theta \in \mathbb{R}^9$.
As evident from \cref{eq:model_f3}, the magnitude of these parameters is inherently unidentifiable.
We thus compared the estimation results and the reference values suggested in the paper and the codes \citep{gamellaCausalChambersRealworld2025} with the cosine similarity.
Given $\tilde{R}$, $\tilde{G}$, and $\tilde{B}$ computed by \cref{eq:model_f3}, then $f_\theta$ outputs an image by replicating the RGB values spatially in a hexagonal shape resembling the light placement pattern.
The full hybrid model further transforms such an image with a convolutional neural net with U-Net architecture.

%%%%%%%%%%%%%%%%%%%%%%%%%%%%%%%%%%%%%%%%%%%%%%%%%%%%%%%%%%%%%%%%%%%%%

\section{Learning and Model Selection}

For all tasks and learning strategies, we used the Adam optimizer for setting adaptive learning rates.
The overall learning rate of Adam was set to $10^{-4}$ for \textsc{pendulum time-series}, \textsc{reaction-diffusion}, and \textsc{duffing oscillator}; $3 \times 10^{-4}$ for \textsc{wind tunnel}; and $10^{-3}$ for \textsc{pendulum images} and \textsc{light tunnel}.
We run the optimization for 20000 iterations.
We did full-batch training for \textsc{pendulum time-series}, \textsc{reaction-diffusion}, \textsc{duffing oscillator}, and \textsc{wind tunnel}; and mini-batch training with a batch size of 50 for \textsc{pendulum images} and \textsc{light tunnel}.
We applied the cosine annealing schedule for the learning rate with a single period over the entire training iterations, i.e., the learning rate is gradually decreased from the initial value to $10^{-6}$.

We report the performance of a model that achieved the best $\theta$-error for all the methods.
Although it is against the normal practice to select models by validation prediction errors, we had to do so because the $y$-error and $\theta$-error are usually not consistent with finite training and validation datasets; it is, in the first place, the cause of our headache in hybrid modeling!
If the validation/test prediction error was a good proxy of the parameter estimation error, we (and other researchers) would not have done such a lot of research on this topic.
To maintain the comparison fair, we applied this model selection strategy not only to the proposed methods but also to all the baseline methods.

%%%%%%%%%%%%%%%%%%%%%%%%%%%%%%%%%%%%%%%%%%%%%%%%%%%%%%%%%%%%%%%%%%%%%

\section{More Results}
\label{appendix:results}

\subsection{Learning with Varying Scientific Parameters}

We addressed a more challenging problem setting, where the data are generated by varying values of $\theta$.
For the \textsc{pendulum time-series} task, we randomly sampled $\omega$ from the uniform distribution $U(0.1, 1.0)$ for each input-output pair.
The input $x$ is now a sequence $x = [ \mathbf{v}(- (m_x - 1) \Delta t), \dots, \mathbf{v}(0) ] \in \mathbb{R}^{m_x \times 2}$, where $m_x=10$ is the input sequence length, instead of the mere initial condition $\mathbf{v}(0)$ in the original task.
We added an encoder network to the model, which receives the input sequence and outputs an estimate of $\omega$.
The encoder network is a feed-forward neural net with two hidden layers of size 256 and the ReLU activation function.
$x$ is first flattened to a vector before feeding it to the encoder.

We altered the \textsc{reaction-diffusion} task similarly to the above.
We randomly sampled $a$ and $b$ from the uniform distributions $U(0, 0.02)$ and $U(0, 0.02)$, respectively, for each input-output pair.
The input $x$ is now a sequence of 2-channel 2-D fields of size $2 \times d \times d \times m_x$ with length $m_x=5$.
The encoder network comprises a convolutional network and a feed-forward network with fully-connected layers.
The convolutional net has two hidden layers of size 288 and 144, the ReLU activation function, and batch normalization.
We feed the input $x$ to the convolutional net by first flattening the temporal dimension into the channel dimension.
Its output is then fed to the feed-forward net, which has one hidden layer of size 256 and the ReLU activation function.

For the \textsc{duffing oscillator} task, we randomly sampled $\alpha$ from the uniform distribution $U(0, 2)$ for each input-output pair.
The input $x$ is now a sequence of length $m_x=10$.
The encoder network is a feed-forward neural net with two hidden layers of size 256 and the ReLU activation function.

In \cref{fig:varphys}, we show the outputs of the encoder networks.
Most notably, in the \textsc{duffing oscillator} task, the encoder learned with \texttt{fsam} infers $\theta$ with better correlation to the data-generating values, compared to the \texttt{erm} case.
Meanwhile there remain non-negligible variances in the inferred values, which suggests the difficulty of the problem.
On the other hand, the results of the \textsc{reaction-diffusion} task show almost no difference between the two methods, which implies that the task is relatively easy.

\begin{figure}[t]
    \centering
    \begin{minipage}[t]{0.48\textwidth}
        \centering
        \vspace*{0pt}
        \begin{minipage}[t]{0.48\linewidth}
            \centering
            \vspace*{0pt}
            \includegraphics[clip,width=\linewidth]{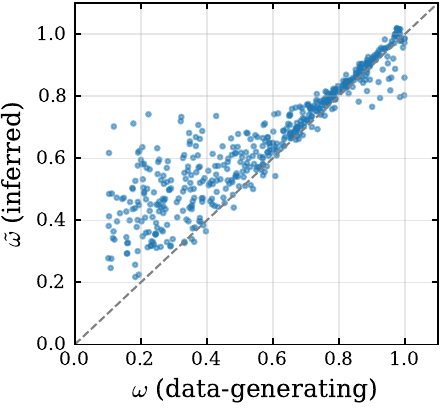}
            \\
            \hspace{1em}\texttt{erm} \footnotesize{($\text{RMSE}=0.098$)}
        \end{minipage}
        \hfill
        \begin{minipage}[t]{0.48\linewidth}
            \centering
            \vspace*{0pt}
            \includegraphics[clip,width=\linewidth]{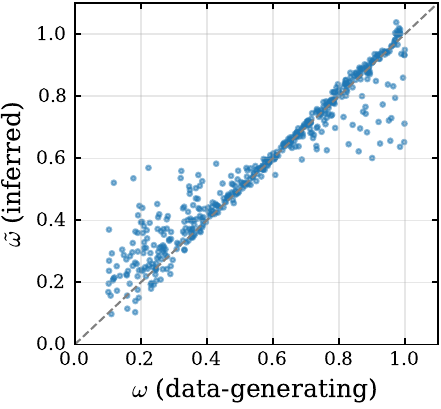}
            \\
            \hspace{1em}\texttt{fsam} \footnotesize{($\text{RMSE}=0.046$)}
        \end{minipage}
        \\[2ex]
        (a) \textsc{\textsc{pendulum time-series}} \footnotesize{($\theta = \{\omega\}$)}
    \end{minipage}
    %
    % \\[5ex]
    \hfill
    \begin{minipage}[t]{0.48\textwidth}
        \centering
        \vspace*{0pt}
        \begin{minipage}[t]{0.48\linewidth}
            \centering
            \vspace*{0pt}
            \includegraphics[clip,width=\linewidth]{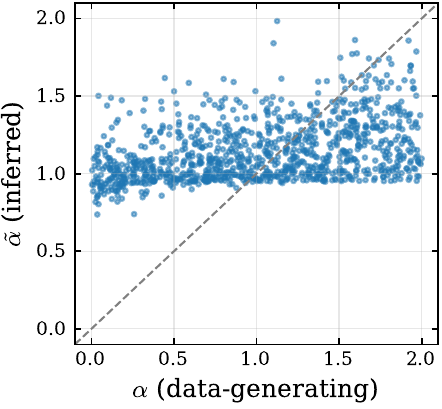}
            \\
            \hspace{1em}\texttt{erm} \footnotesize{($\text{RMSE}=0.45$)}
        \end{minipage}
        \hfill
        \begin{minipage}[t]{0.48\linewidth}
            \centering
            \vspace*{0pt}
            \includegraphics[clip,width=\linewidth]{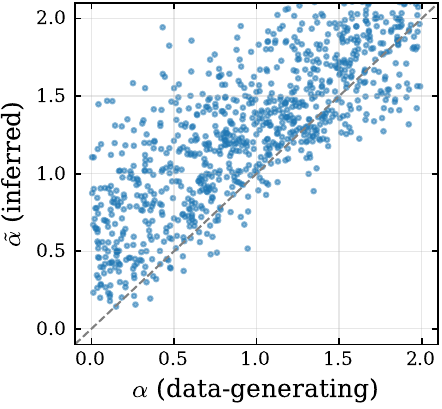}
            \\
            \hspace{1em}\texttt{fsam} \footnotesize{($\text{RMSE}=0.40$)}
        \end{minipage}
        \\[2ex]
        (b) \textsc{\textsc{duffing oscillator}} \small{($\theta = \{\alpha\}$)}
    \end{minipage}
    \\[4ex]
    \begin{minipage}[t]{0.99\textwidth}
        \centering
        \vspace*{0pt}
        \begin{minipage}[t]{0.49\linewidth}
            \centering
            \vspace*{0pt}
            \includegraphics[clip,width=\linewidth]{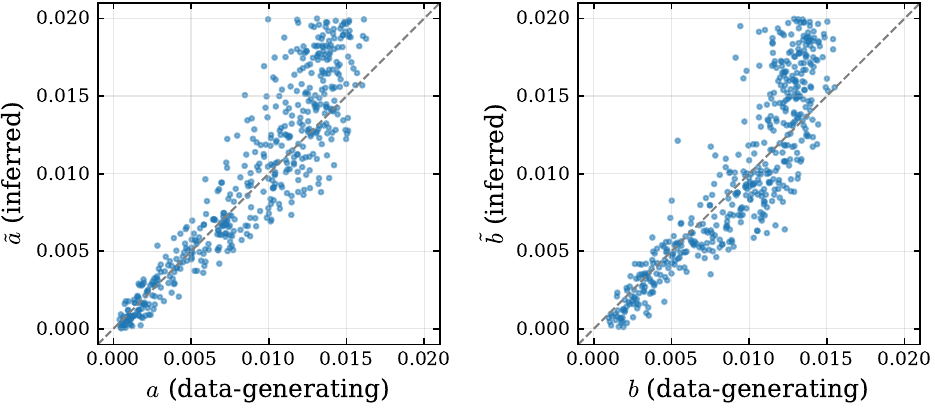}
            \\
            \hspace{1em}\texttt{erm} \footnotesize{($\text{RMSE}=0.0033$)}
        \end{minipage}
        \hfill
        \begin{minipage}[t]{0.49\linewidth}
            \centering
            \vspace*{0pt}
            \includegraphics[clip,width=\linewidth]{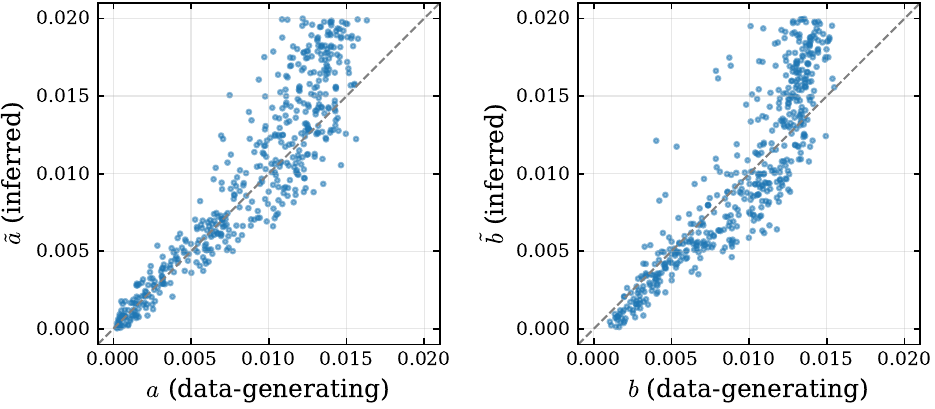}
            \\
            \hspace{1em}\texttt{fsam} \footnotesize{($\text{RMSE}=0.0033$)}
        \end{minipage}
        \\[2ex]
        (c) \textsc{\textsc{reaction-diffusion}} \small{($\theta = \{a,b\}$)}
    \end{minipage}
    \caption{Inferred values of the parameters of the scientific model part, $\theta$.}
    \label{fig:varphys}
\end{figure}

\subsection{Sensitivity to Hyperparameters}

In \cref{fig:sensitivity_full}, we report the full result of the sensitivity of parameters with regard to the hyperparameter configurations.

\clearpage

\begin{figure}[p]
    \centering
    \begin{minipage}[t]{\textwidth}
        \vspace*{0pt}
        \centering
        \hspace{0mm}
        \texttt{p-reg} \hspace{18mm}
        \texttt{f-reg} \hspace{20mm}
        \texttt{sam} \hspace{20mm}
        \texttt{asam} \hspace{19mm}
        \texttt{fsam}
    \end{minipage}
    \begin{minipage}[t]{\textwidth}
        \vspace*{0pt}
        \centering
        \includegraphics[trim={0 10pt 0 0},clip,width=0.19\linewidth]{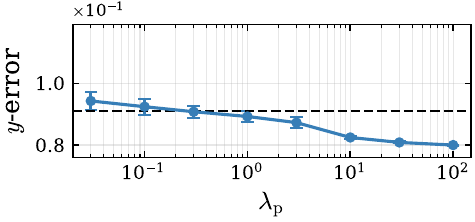}
        \hfill
        \includegraphics[trim={10pt 10pt 0 0},clip,width=0.19\linewidth]{figs/sensitivity/pendraw/reg-yerr.pdf}
        \hfill
        \includegraphics[trim={10pt 10pt 0 0},clip,width=0.19\linewidth]{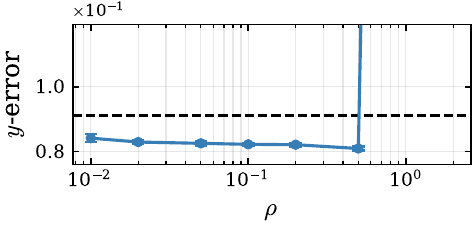}
        \hfill
        \includegraphics[trim={10pt 10pt 0 0},clip,width=0.19\linewidth]{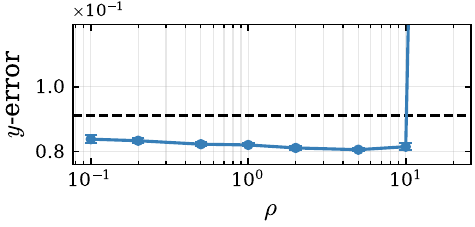}
        \hfill
        \includegraphics[trim={10pt 10pt 0 0},clip,width=0.19\linewidth]{figs/sensitivity/pendraw/fsam-yerr.pdf}
        \\
        \includegraphics[trim={0 0 0 0},clip,width=0.19\linewidth]{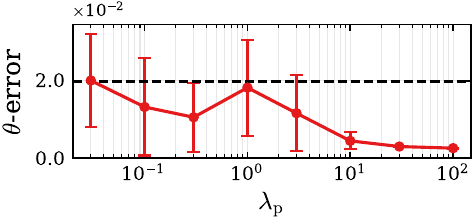}
        \hfill
        \includegraphics[trim={10pt 0 0 0},clip,width=0.19\linewidth]{figs/sensitivity/pendraw/reg-therr.pdf}
        \hfill
        \includegraphics[trim={10pt 0 0 0},clip,width=0.19\linewidth]{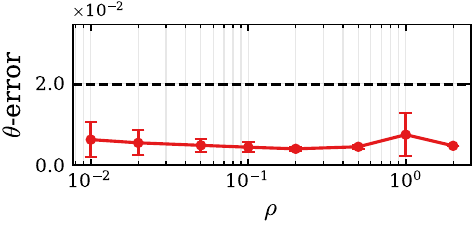}
        \hfill
        \includegraphics[trim={10pt 0 0 0},clip,width=0.19\linewidth]{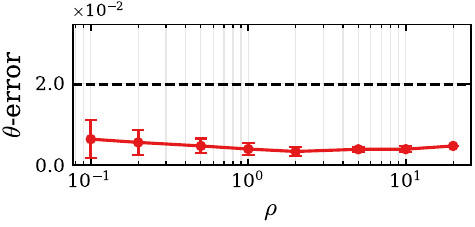}
        \hfill
        \includegraphics[trim={10pt 0 0 0},clip,width=0.19\linewidth]{figs/sensitivity/pendraw/fsam-therr.pdf}
        \\[-1ex]
        (a) \textsc{pendulum time-series}
    \end{minipage}
    \\[2ex]
    \begin{minipage}[t]{\textwidth}
        \vspace*{0pt}
        \centering
        \includegraphics[trim={0 10pt 0 0},clip,width=0.19\linewidth]{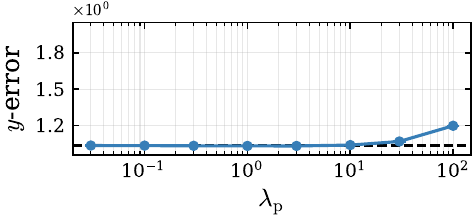}
        \hfill
        \includegraphics[trim={10pt 10pt 0 0},clip,width=0.19\linewidth]{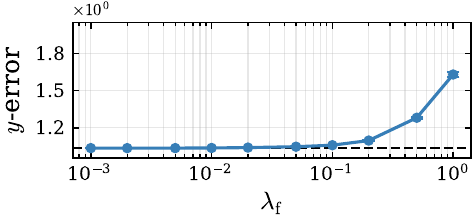}
        \hfill
        \includegraphics[trim={10pt 10pt 0 0},clip,width=0.19\linewidth]{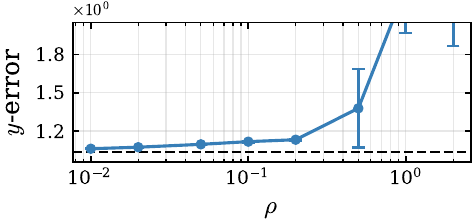}
        \hfill
        \includegraphics[trim={10pt 10pt 0 0},clip,width=0.19\linewidth]{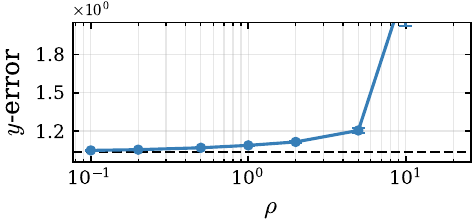}
        \hfill
        \includegraphics[trim={10pt 10pt 0 0},clip,width=0.19\linewidth]{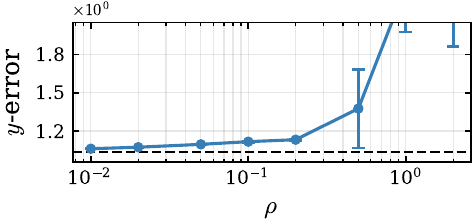}
        \\
        \includegraphics[trim={0 0 0 0},clip,width=0.19\linewidth]{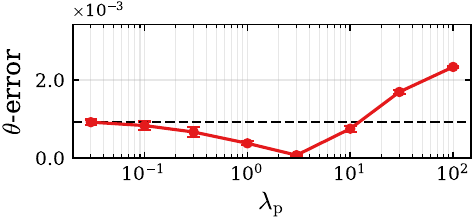}
        \hfill
        \includegraphics[trim={10pt 0 0 0},clip,width=0.19\linewidth]{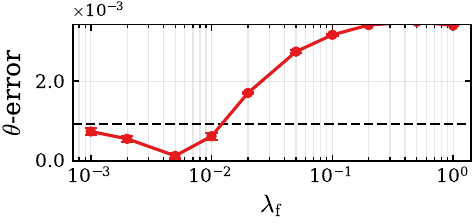}
        \hfill
        \includegraphics[trim={10pt 0 0 0},clip,width=0.19\linewidth]{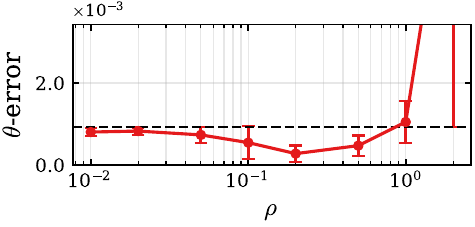}
        \hfill
        \includegraphics[trim={10pt 0 0 0},clip,width=0.19\linewidth]{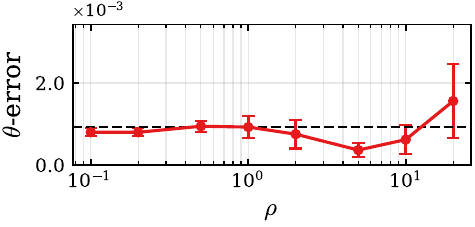}
        \hfill
        \includegraphics[trim={10pt 0 0 0},clip,width=0.19\linewidth]{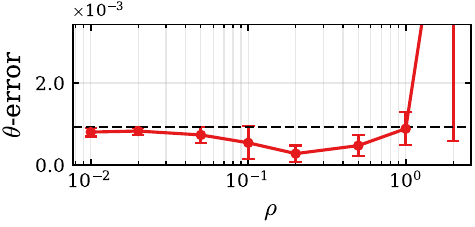}
        \\[-1ex]
        (b) \textsc{reaction-diffusion}
    \end{minipage}
    \\[2ex]
    \begin{minipage}[t]{\textwidth}
        \vspace*{0pt}
        \centering
        \includegraphics[trim={0 10pt 0 0},clip,width=0.19\linewidth]{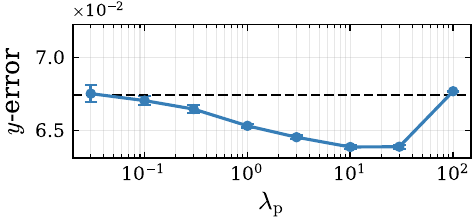}
        \hfill
        \includegraphics[trim={10pt 10pt 0 0},clip,width=0.19\linewidth]{figs/sensitivity/duffing/reg-yerr.pdf}
        \hfill
        \includegraphics[trim={10pt 10pt 0 0},clip,width=0.19\linewidth]{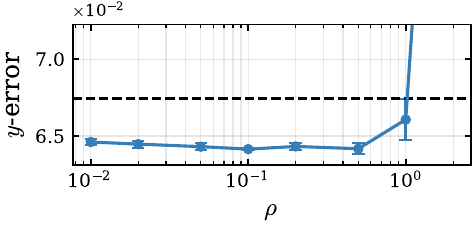}
        \hfill
        \includegraphics[trim={10pt 10pt 0 0},clip,width=0.19\linewidth]{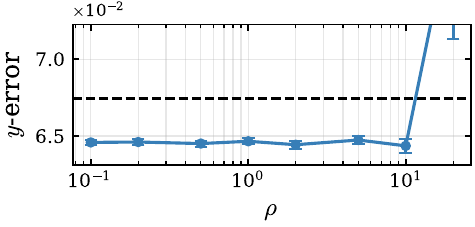}
        \hfill
        \includegraphics[trim={10pt 10pt 0 0},clip,width=0.19\linewidth]{figs/sensitivity/duffing/fsam-yerr.pdf}
        \\
        \includegraphics[trim={0 0 0 0},clip,width=0.19\linewidth]{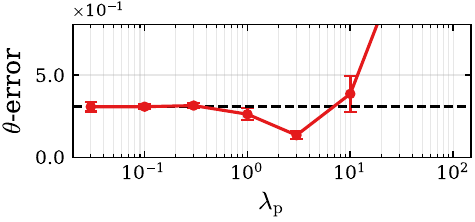}
        \hfill
        \includegraphics[trim={10pt 0 0 0},clip,width=0.19\linewidth]{figs/sensitivity/duffing/reg-therr.pdf}
        \hfill
        \includegraphics[trim={10pt 0 0 0},clip,width=0.19\linewidth]{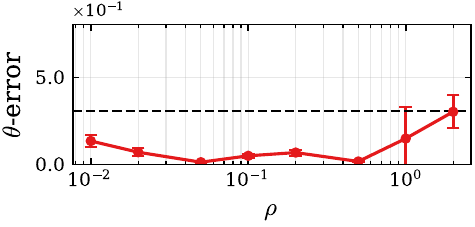}
        \hfill
        \includegraphics[trim={10pt 0 0 0},clip,width=0.19\linewidth]{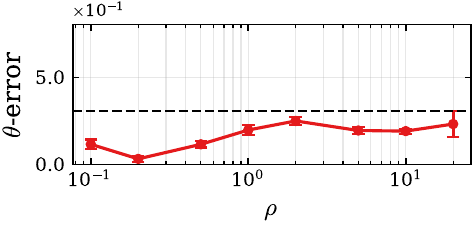}
        \hfill
        \includegraphics[trim={10pt 0 0 0},clip,width=0.19\linewidth]{figs/sensitivity/duffing/fsam-therr.pdf}
        \\[-1ex]
        (c) \textsc{Duffing oscillator}
    \end{minipage}
    \\[2ex]
    \begin{minipage}[t]{\textwidth}
        \vspace*{0pt}
        \centering
        \includegraphics[trim={0 10pt 0 0},clip,width=0.19\linewidth]{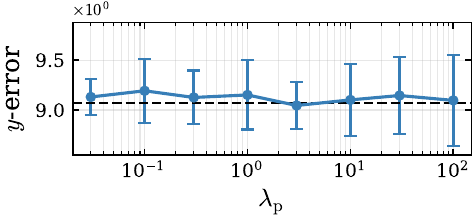}
        \hfill
        \includegraphics[trim={10pt 10pt 0 0},clip,width=0.19\linewidth]{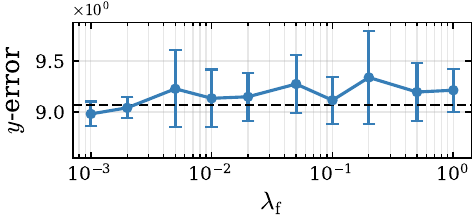}
        \hfill
        \includegraphics[trim={10pt 10pt 0 0},clip,width=0.19\linewidth]{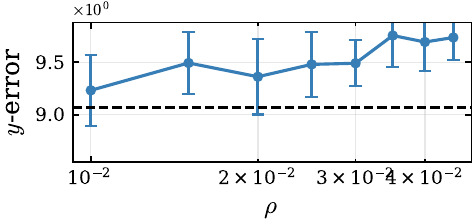}
        \hfill
        \includegraphics[trim={10pt 10pt 0 0},clip,width=0.19\linewidth]{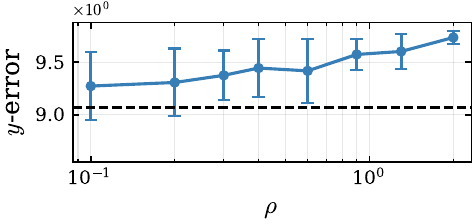}
        \hfill
        \includegraphics[trim={10pt 10pt 0 0},clip,width=0.19\linewidth]{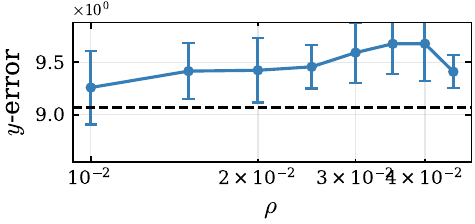}
        \\
        \includegraphics[trim={0 0 0 0},clip,width=0.19\linewidth]{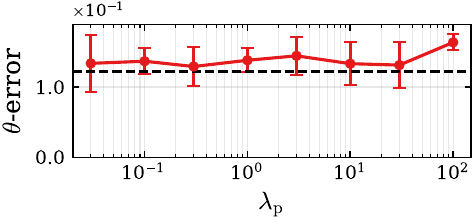}
        \hfill
        \includegraphics[trim={10pt 0 0 0},clip,width=0.19\linewidth]{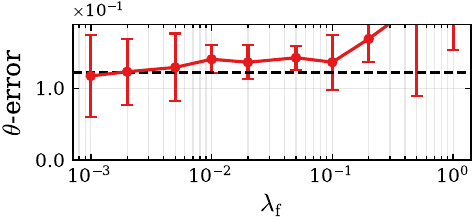}
        \hfill
        \includegraphics[trim={10pt 0 0 0},clip,width=0.19\linewidth]{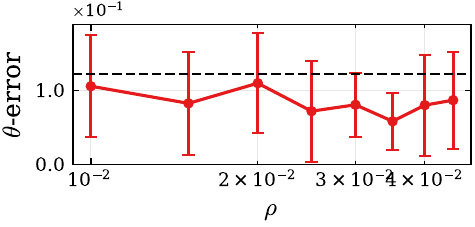}
        \hfill
        \includegraphics[trim={10pt 0 0 0},clip,width=0.19\linewidth]{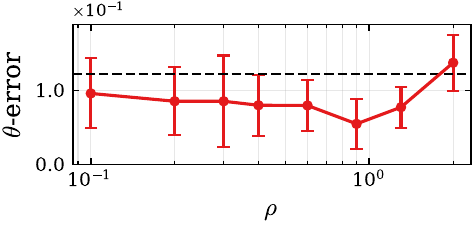}
        \hfill
        \includegraphics[trim={10pt 0 0 0},clip,width=0.19\linewidth]{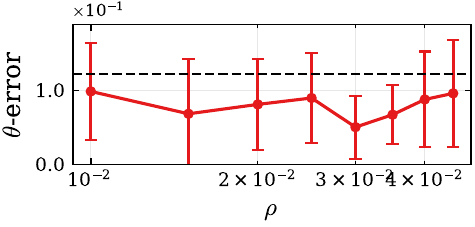}
        \\[-1ex]
        (d) \textsc{pendulum images}
    \end{minipage}
    \\[2ex]
    \begin{minipage}[t]{\textwidth}
        \vspace*{0pt}
        \centering
        \includegraphics[trim={0 10pt 0 0},clip,width=0.19\linewidth]{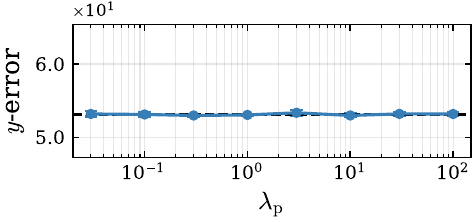}
        \hfill
        \includegraphics[trim={10pt 10pt 0 0},clip,width=0.19\linewidth]{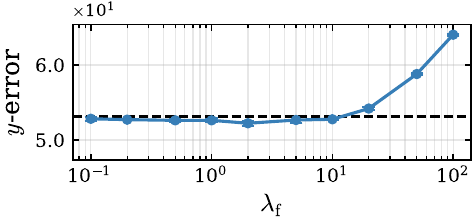}
        \hfill
        \includegraphics[trim={10pt 10pt 0 0},clip,width=0.19\linewidth]{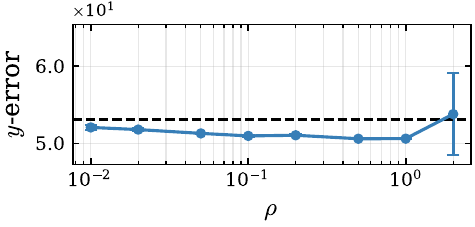}
        \hfill
        \includegraphics[trim={10pt 10pt 0 0},clip,width=0.19\linewidth]{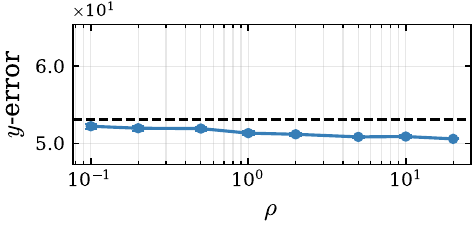}
        \hfill
        \includegraphics[trim={10pt 10pt 0 0},clip,width=0.19\linewidth]{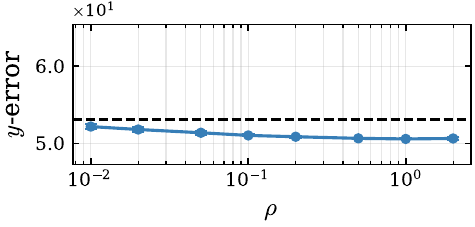}
        \\
        \includegraphics[trim={0 0 0 0},clip,width=0.19\linewidth]{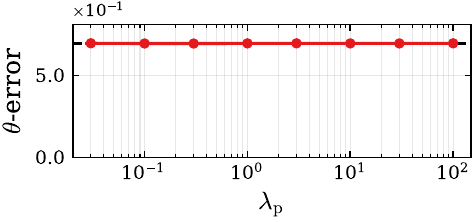}
        \hfill
        \includegraphics[trim={10pt 0 0 0},clip,width=0.19\linewidth]{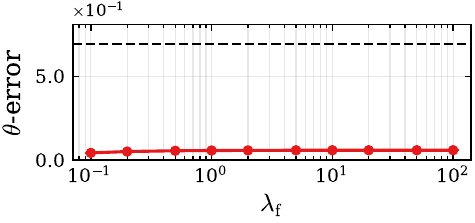}
        \hfill
        \includegraphics[trim={10pt 0 0 0},clip,width=0.19\linewidth]{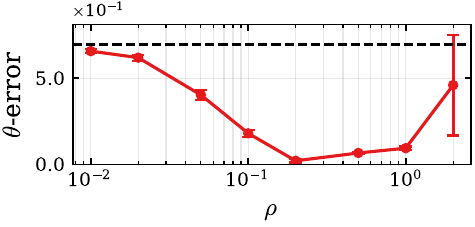}
        \hfill
        \includegraphics[trim={10pt 0 0 0},clip,width=0.19\linewidth]{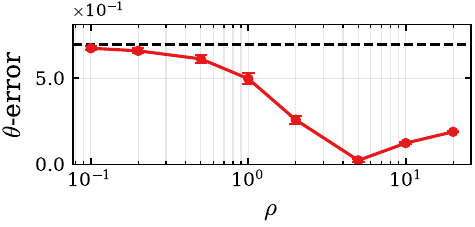}
        \hfill
        \includegraphics[trim={10pt 0 0 0},clip,width=0.19\linewidth]{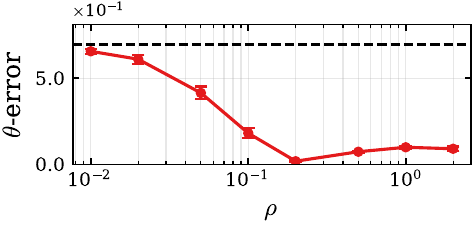}
        \\[-1ex]
        (e) \textsc{wind tunnel}
    \end{minipage}
    \\[2ex]
    \begin{minipage}[t]{\textwidth}
        \vspace*{0pt}
        \centering
        \includegraphics[trim={0 10pt 0 0},clip,width=0.19\linewidth]{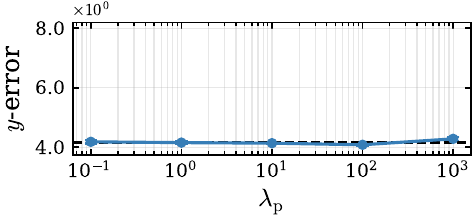}
        \hfill
        \includegraphics[trim={10pt 10pt 0 0},clip,width=0.19\linewidth]{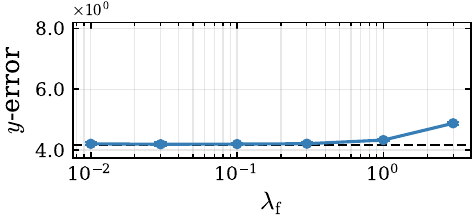}
        \hfill
        \includegraphics[trim={10pt 10pt 0 0},clip,width=0.19\linewidth]{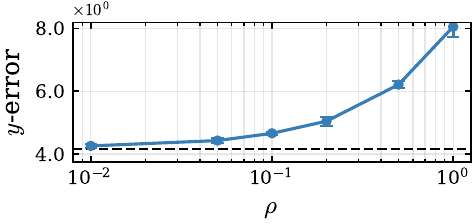}
        \hfill
        \includegraphics[trim={10pt 10pt 0 0},clip,width=0.19\linewidth]{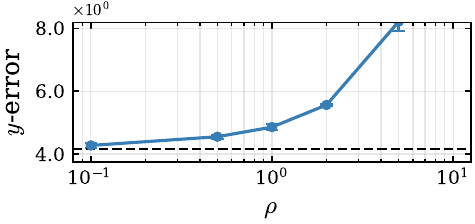}
        \hfill
        \includegraphics[trim={10pt 10pt 0 0},clip,width=0.19\linewidth]{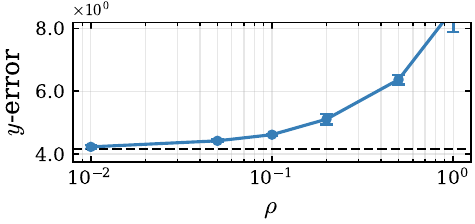}
        \\
        \includegraphics[trim={0 0 0 0},clip,width=0.19\linewidth]{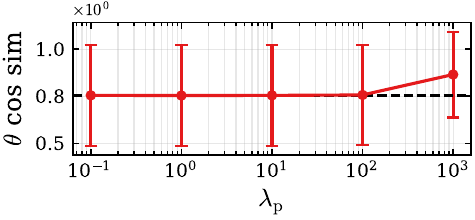}
        \hfill
        \includegraphics[trim={10pt 0 0 0},clip,width=0.19\linewidth]{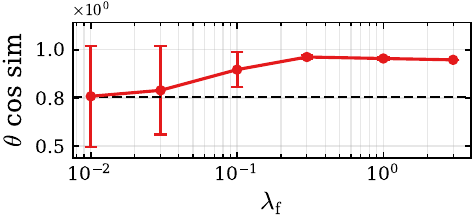}
        \hfill
        \includegraphics[trim={10pt 0 0 0},clip,width=0.19\linewidth]{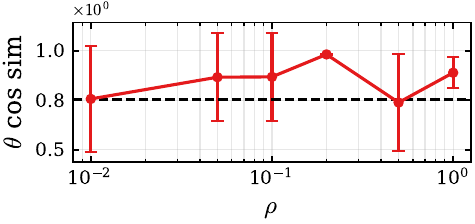}
        \hfill
        \includegraphics[trim={10pt 0 0 0},clip,width=0.19\linewidth]{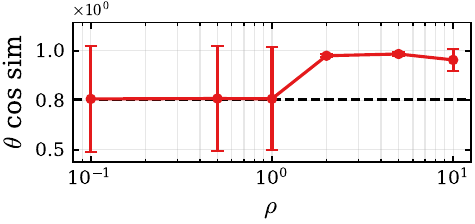}
        \hfill
        \includegraphics[trim={10pt 0 0 0},clip,width=0.19\linewidth]{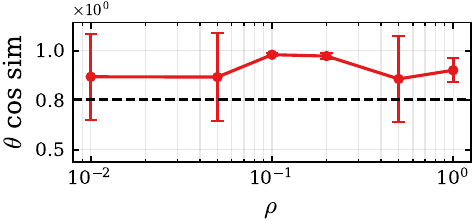}
        \\[-1ex]
        (f) \textsc{light tunnel} (cosine similarities for $\theta$)
    \end{minipage}
    \caption{Sensitivity of performance with regard to the hyperparameters.}
    \label{fig:sensitivity_full}
\end{figure}

%%%%%%%%%%%%%%%%%%%%%%%%%%%%%%%%%%%%%%%%%%%%%%%%%%%%%%%%%%%%

% \clearpage
% \input{checklist.tex}

\end{document}